\newcommand{\modelname}{HiPoNet}
\title{{\modelname}: A Multi-View Simplicial Complex Network for High Dimensional Point-Cloud and Single-Cell data}
\theoremstyle{plain}
\newtheorem{theorem}{Theorem}[section]
\newtheorem{corollary}[theorem]{Corollary}
\theoremstyle{definition}
\newtheorem{definition}[theorem]{Definition}
\theoremstyle{remark}
\newcommand{\m}[1]{\mathbf{#1}}
\author{%
  Siddharth Viswanath$^{\ast 1}$ \quad Hiren Madhu$^{\ast 1}$ \quad Dhananjay Bhaskar$^1$ \quad Jake Kovalic$^1$ \AND David R. Johnson$^2$ \quad Christopher Tape$^3$ \quad Ian Adelstein$^1$ \quad Rex Ying$^1$ \AND Michael Perlmutter$^2$\quad Smita Krishnaswamy$^{1}$ \\[0.5cm]
  $^1$ Yale University\; $^2$ Boise State University \; $^3$ University College London\\[0.2cm]
  $^\ast$Equal Contribution \quad Correspondence: \texttt{smita.krishnaswamy@yale.edu}
}
\begin{document}

\maketitle

\begin{abstract}
In this paper, we propose {{\modelname}}, an end-to-end differentiable neural network for regression, classification, and representation learning on high-dimensional point clouds. Our work is motivated by single-cell data which can have very high-dimensionality -- exceeding the capabilities of existing methods for point clouds which are mostly tailored for 3D data. Moreover, modern single-cell and spatial experiments now yield entire cohorts of datasets (i.e., one data set for every patient), necessitating models that can process large, high-dimensional point-clouds at scale. Most current approaches build a single nearest-neighbor graph, discarding important geometric and topological information. In contrast, {\modelname} models the point-cloud as a set of higher-order simplicial complexes, with each particular complex being created using a reweighting of features. This method thus generates multiple constructs corresponding to different views of high-dimensional data, which in biology offers the possibility of disentangling distinct cellular processes. It then employs simplicial wavelet transforms to extract multiscale features, capturing both local and global topology from each view. We  show that geometric and topological information is preserved in this framework both theoretically and empirically.  We showcase the utility of {{\modelname}} on point-cloud level tasks, involving classification and regression of entire point-clouds in data cohorts. Experimentally, we find that {\modelname} outperforms other point-cloud and graph-based models on single-cell data. We also apply {\modelname} to spatial transcriptomics datasets using spatial coordinates as one of the views. Overall, {\modelname} offers a robust and scalable solution for high-dimensional data analysis.
\end{abstract}

\section{Introduction}
\label{sec:intro}
High-dimensional point clouds, i.e., sets of points $\mathcal{X}=\{\mathbf{x}_i\}_{i=1}^n\subseteq \mathbb{R}^d$ now arise in many fields---most prominently single-cell analysis~\cite{VENKAT2023551, trellis-pdo, pmlr-v196-chew22a, macdonald2023flowartisthighdimensionalcellular, Ahlmann-Eltze2025, Kröger2024}, where using modern technologies such as mass cytometry or scRNA-seq, large cohorts of patients can now be measured producing several high-dimensional data. Further, technologies like perturb-seq enable the possibility of studying single-cell data under many conditions leading to comparable cohorts of datasets. Thus, machine learning techniques that once reasoned about data points and classified data points, now have to reason about collections of datasets. This serves as the motivation for {\modelname} illustrated in Figure \ref{fig:abstract_fig}.

Generally, 
methods like UMAP~\cite{mcinnes2020umapuniformmanifoldapproximation} or PHATE~\cite{moon2019visualizing} reduce point clouds defined by single-cell data into an individual graph learned from all features (often in the tens of thousands).  However, this single graph may not specifically organize cells according to processes defined by subsets of dimensions. For instance, cells may be in a specific phase of the cell cycle which would be revealed by an organization based on cell cycle genes, or at a certain point in a differentiation process which would be revealed by stem cell genes. To address these limitations, {\modelname} utilizes multiple reweighted feature views.  Additionally,  existing neural network methods for point cloud data, such as PointNet~\citep{pointnet} and its variants~\citep{pointnet++, pointtransformer, dgcnn, pointmlp} are primarily designed for 3D point clouds and rely on spatially localized features\footnote{We discuss these techniques more in Appendix~\ref{sec:related}
.}. Hence, they are not well equipped to handle high-dimensional point clouds or disentangle processes from high-dimensional data. Therefore, there is a growing need to develop neural networks that can efficiently handle these diverse high-dimensional point clouds, ingest them seamlessly, encode the cellular processes, and perform various downstream machine learning tasks on them. 

In {\modelname}, 
we use multiple views of the data, each of which is learned using a feature reweighting vector, thus potentially detangling and making implicit biological processes explicit. Furthermore, rather than modeling the data as a graph, we model it as a simplicial complex that captures higher-order relationships between cells, which {\modelname} analyzes using multiscale wavelets. Overall,  we gain a rich representation that disentangles processes by capturing hierarchical relationships and separating overlapping biological processes, while preserving geometric and topological information of the underlying point clouds. Preserving this structure is important because it reflects the intrinsic geometry and topology of the data, providing a better interpretation of complex biological processes, and hence improving downstream analysis. 

\begin{wrapfigure}{r}{0.5\textwidth} 
\captionsetup{font=scriptsize}
    \centering
    \includegraphics[width=0.38\textwidth]{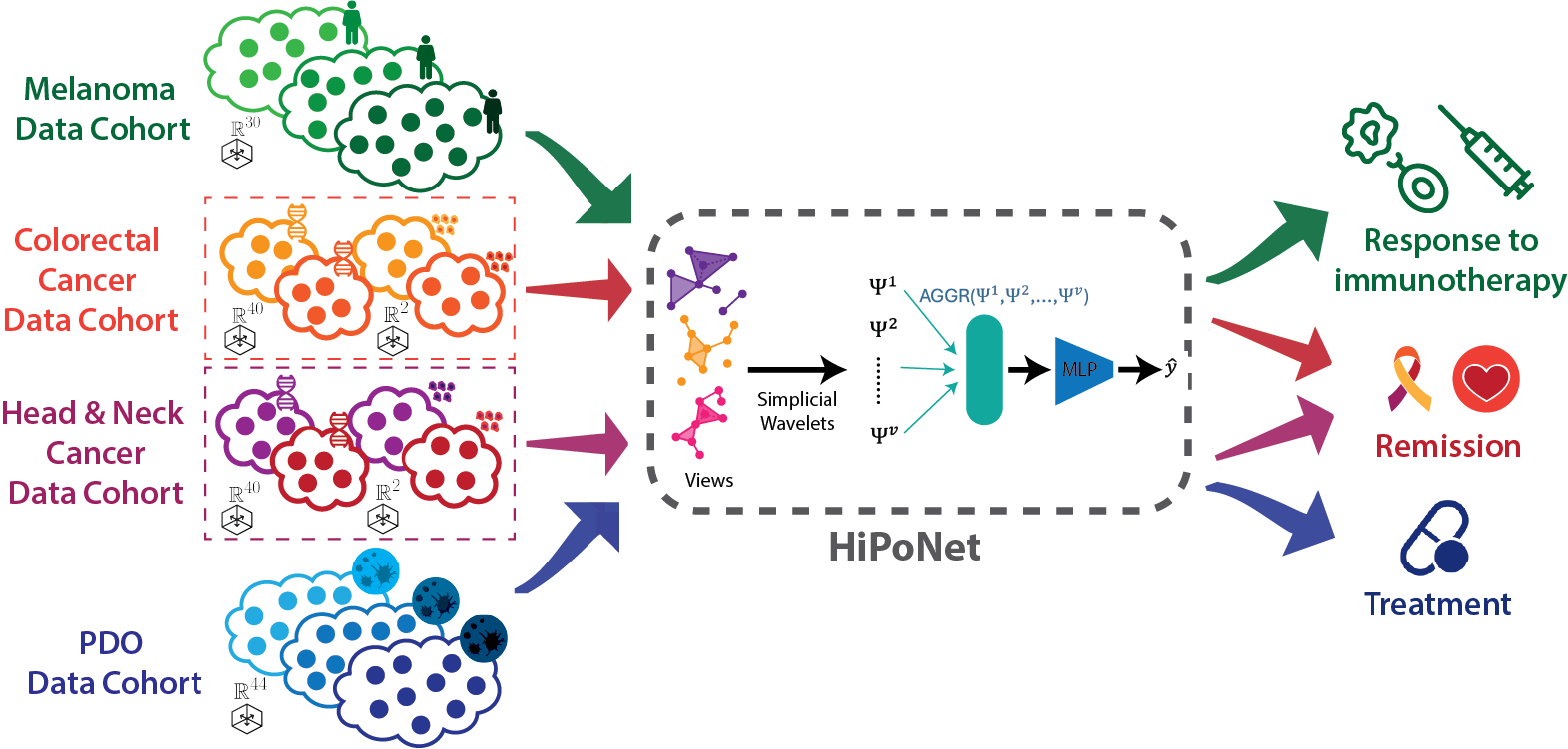}
    \caption{The \modelname{} pipeline.}
    \label{fig:abstract_fig}
\end{wrapfigure}

To capture the  global structure of the point clouds, we use a wavelet-based multiscale message aggregation scheme rather than the message passing operations utilized in most common graph and simplicial neural networks. This choice is based on the observation that such networks struggle to capture long-range dependencies and multiscale relationships as well as work on the geometric scattering transform \cite{gao2019geometric,perlmutter2023understanding,gama2018diffusion,gama2019stability,jiang2024limiting,wenkel2022overcoming,bodmann2024scattering,xu2023blis,tong2022learnable} showing that wavelets may help remedy this issue. Although newer models like graph transformers~\cite{graph-transformers} can capture long range dependencies, they often essentially ignore the graph geometry. When applied on these point clouds, they only work on a single graph and are do not preserve the intrinsic geometric structure of the point cloud. 
However, using diffusion wavelets, we are able to show theoretically that we capture the underlying geometry of the point cloud including the $0$-homology (connected components) and curvature.
Empirically we show that we can predict persistence homology directly from our simplicial neural network, without explicitly computing topological signatures or reducing the data to those features. These properties that are preserved by {\modelname} are essential 
in its ability to perform classification tasks, such as response to immunotherapy or drug treatment response. 
The key features of {\modelname} include:
\begin{itemize}[leftmargin=*,left=0pt..1em,topsep=0pt,itemsep=0pt]
    \item \textbf{Learning multiple views the the data:} We introduce a learnable scaling mechanism to reweight each feature via a weighting vector, which we then use to learn a graph to organize the data by the processes represented by the particular view. 
    \item \textbf{Higher-order constructions:} We learn simplicial complexes from the data views. Our neural networks utilize these complexes and retain geometric and topological information including volume, curvature, connected components, and holes in the data. 
    \item \textbf{Multiscale Message Aggregation:} Rather than simple message passing schemes that smooth data, we use multiscale simplicial wavelets to aggregate features over the simplicial complex and construct simplicial scattering coefficients. 
    This allows us to extract both local and global information from the data without oversmoothing. 
\end{itemize}

We demonstrate the utility of {\modelname} on three diverse single-cell {\em data cohorts}, each comprising ensembles of datasets collected from multiple patients or conditions. The first involves melanoma~\cite{melanoma} patients undergoing immunotherapy, where the task is to predict treatment response from multiplex ion beam imaging (MIBI) with 61K cells from 54 patients. The second features patient-derived organoids~\cite{trellis-pdo} (PDOs) from 12 colorectal cancer patients, grown under various treatments and co-culture conditions; comprising of 1.8M cells, and the task is to identify the applied treatment. The third cohort uses spatial transcriptomics data from around 500 cancer biopsies~\cite{space-gm}, capturing 40 protein markers per cell through CODEX technology along with spatial arrangements, comprising of about 558K cells across three data cohorts. These datasets highlight the scalability of {\modelname} for complex, high-dimensional biological point clouds, where complexity arises from the number of datasets rather than the number of points per dataset

\section{Background}\label{sec: background}

In this section, we present the mathematical foundations needed for our method. We model point clouds $\mathcal{X} = \{\mathbf{x}_1, \dots, \mathbf{x}_n\}$ as simplicial complexes in order to capture higher-order geometric and topological structures. We will first introduce simplicial complexes and their algebraic representations, including boundary matrices and Laplacians. We then describe wavelet transforms and diffusion processes for extracting multi-scale topological features, and finally discuss random walks as efficient approximations of diffusion for scalable computation.

Given a finite set $\mathcal{X} = \{\m{x}_1, \m{x}_2, \ldots, \m{x}_n\}$, a \(k\)-simplex, denoted as $\sigma_k = \{\m{x}_{\sigma_k^0}, \m{x}_{\sigma_k^1}, \dots, \m{x}_{\sigma_k^k}\}$, is a subset of $\mathcal{X}$ containing $k+1$ points. If $\tau_{k-1}$ is a subset of $\sigma_k$ that contains $k$ points, we say that $\tau_{k-1}$ is a face of $\sigma_k$.
A simplicial complex \(\mathcal{S}\) is a finite collection of simplices that satisfies the principle of inclusion ~\citep{ topological-simplex-sp,random-walk-simplex}.  
That is, we have $\tau_{k-1}\in\mathcal{S}$ whenever if $\tau_{k-1}$ is a face of $\sigma_k$ for some $\sigma_k\in\mathcal{S}$. The order of a simplicial complex, \(K\), is determined by the highest-order simplex contained within \(\mathcal{S}\). Therefore a $K$-th order simplicial complex \(\mathcal{S}\) contains simplices of orders \(k = 0, 1, \dots, K\). Observe that when $K=1$, a simplicial complex is essentially equivalent to a graph, where the $0$-simplices and $1$-simplices correspond to the vertices and edges.

Features of simplices across all orders are denoted as \(\m{X} = \{\m{X}_0, \m{X}_1, \dots, \m{X}_K\}\), with \(\m{X}_k \in \mathbb{R}^{N_k \times D_k}\), where $N_k$ is the number of $k$ simplices and $D_k$ is the feature dimension of $k$-simplices. 
We note that we can consider both unoriented simplices or oriented simplices, where the orientation is defined via a fixed ordering on the vertices, i.e., 0-complexes. (For details, please see \cite{savostianov2024cholesky} and the references within.) Except when otherwise specified, our theory and algorithms will apply to either case.

In a simplicial complex $\mathcal{S}$, a \(k\)-simplex \(\sigma_k\in\mathcal{S}\) can have four types of neighbors:

\begin{itemize}[leftmargin=*,left=0pt..1em,topsep=0pt,itemsep=0pt]

\item \textbf{Boundary adjacent neighbors or faces:} These are the \((k-1)\)-simplices $\tau_{k-1}$ that are faces of \(\sigma_k\), denoted as 
 $\mathcal{N}_{\mathcal{B}}(\sigma_k)=\{\tau_{k-1} \,\text{:}\, \tau_{k-1} \subset \sigma_k\}$. 

\item \textbf{Coboundary adjacent neighbors or co-faces:} These are all \((k+1)\)-simplices $\tau_{k+1}$ which have \(\sigma_k\) as a face, denoted as $\mathcal{N}_{\mathcal{C}}(\sigma_k)=\{\tau_{k+1} \,\text{:}\, \sigma_k \subset \tau_{k+1}\}.$

\item \textbf{Lower adjacent neighbors:} These are simplices $\tau_k$ of the same order as \(\sigma_k\) that share a common face, $\rho_{k-1}$, represented as $\mathcal{N}_{\mathcal{L}}(\sigma_k)=\{\tau_k \,\text{:}\, \rho_{k-1}=\sigma_k\cap \tau_k,  \: |\rho_{k-1}|=k-1 \}$.

\item \textbf{Upper adjacent neighbors:} These are simplices, $\tau_k$ of the same order as \(\sigma_k\) that are contained in a common \((k+1)\)-simplex $\rho_{k+1}\in\mathcal{S}$, given by 
$\mathcal{N}_{\mathcal{U}}(\sigma_k)=\{\tau_k \,\text{:}\, \rho_{k+1}=\sigma_k\cup \tau_k,\: |\rho_{k+1}|=k+1\}$.
\end{itemize}

We let $\mathcal{N}_1(\sigma_k) = \cup \{\sigma_k, \mathcal{N}_{\mathcal{B}}(\sigma_k), \mathcal{N}_{\mathcal{C}}(\sigma_k), \mathcal{N}_{\mathcal{L}}(\sigma_k), \mathcal{N}_{\mathcal{U}}(\sigma_k)\}$ denote the 1-hop neighborhood of a simplex $\sigma_k$, and for $m\geq 2$, we define the $m$-hop neighborhood of a simplex $\sigma_k$ is recursively as $\mathcal{N}_m(\sigma_k) = \cup_{\tau \in \mathcal{N}_{m-1}(\sigma_k)} \mathcal{N}_1(\tau)$ for $m\geq 2$.

The relationships between \(k\)-simplices and their faces are encoded in the boundary matrices, denoted as \(\mathbf{B}_k \in \mathbb{R}^{N_{k-1} \times N_k}\), where \(N_k\) is the number of simplices \textcolor{black}{in $\mathcal{S}$} of order \(k\). The \((i, j)\)-th entry of \(\mathbf{B}_k\) is non-zero if the \(i\)-th \((k-1)\)-simplex is a face of the \(j\)-th \(k\)-simplex. For oriented simplices, these entries \(\mathbf{B}_k\) take values \(\pm1\), reflecting their relative orientation. For unoriented simplices, all of the non-zero entries of $\mathbf{B}_k$ are equal to 1. 
We let $\m{B} = \{\m{B}_k\}_{k=1}^K,$ denote the set of all boundary matrices. 

We note that the boundary matrices are defined in terms of only boundary adjacent the and co-boundary adjacent neighbors. However, they may also be used to construct the Laplacians which encode information about lower and upper adjacent neighbors.
Specifically, we define the lower and upper Laplcians by \(\m{L}^{\ell}_k = \mathbf{B}_k^\top \mathbf{B}_k\) and \(\m{L}^u_k = \mathbf{B}_{k+1} \mathbf{B}_{k+1}^\top\).
We then define the $k$-Hodge Laplacian by: 
\begin{equation} 
\Delta_k = \m{L}^{\ell}_k + \m{L}^u_k, \label{eq:hodge-Laplacian} 
\end{equation}
(with $\mathbf{L}_0^\ell=\mathbf{L}_K^{u}=0$). We note that for oriented simplices, $\Delta_0$ is merely the standard graph Laplacian. 


\subsection{Heat Diffusion and random walks on Simplicial Complexes}
The heat equation on $k$-simplices~\cite{heatonSC} is given by: 
\begin{equation}
\label{eqn:heat_eqn_simplicial_complex}
\frac{\partial u_k(\sigma_k, t)}{\partial t} = -\Delta_k u_k(\sigma_k, t).
\end{equation}

Analogous to classical notions of heat diffusion, it describes how the distribution of heat propagates over the simplicial complex over time. The use of $\Delta_k$ ensures that the diffusion process respects the higher-order connectivity of the complex, encoding its topological structure. 

The intuition behind much of our methods is to use the heat equation to uncover the topological structure of  our point clouds and simplicial complexes, analogous to manifold learning algorithms such as Diffusion Maps \citep{coifman2006diffusionmaps}. However, in our actual algorithm, we will use random walk matrices, which we interpret as  computationally efficient proxies for heat diffusion (similar to \cite{coifman2006diffusionmaps} which uses a diffusion matrix as a discrete approximation of the  heat kernel on the underlying data manifold). Analogous to heat equation, random walks allow information to spread, i.e., diffuse, over the simplicial complex. However, they allow for efficient implementation via sparse matrix-vector multiplications.

We note that simplicial random walks extend the concept of traditional random walks from graphs to simplicial complexes, enabling a richer representation of the data which is able to encode higher-order features. 
Unlike standard graph-based random walks, which model transitions between vertices, simplicial random walks capture transitions between $k$-simplices, thereby incorporating higher-order interactions.
Formally, the simplicial random walk is represented by the transition matrix $\mathbf{P}_k$, which describes the probability of transitioning between $k$-simplices. For unoriented simplices, we define:
\begin{equation} 
\mathbf{P}_k = \Delta_k\m{D}_k^{-1}, \label{eq:random-walk} 
\end{equation}
where $\m{D}_k =\mathrm{diag}(\Delta_k \m{1})$ is a diagonal matrix that normalizes the transition probabilities, ensuring that the columns of $\mathbf{P}_k$  
sum to one. 
In the case of oriented simplices, defining $\mathbf{P}_k$ is non-trivial since the entries of $\Delta_k$ may be either positive or negative. Here, for the sake of simplicity, we will  define $\mathbf{P}_k$ the same way as in the unoriented case, but with first taking the entrywise absolute values of $\Delta_k$, i.e., $\mathbf{P}_k = |\Delta_k|\m{D}_k^{-1}$, $\m{D}_k =\mathrm{diag}(|\Delta_k| \m{1})$. Alternatively, in the case $k=0$ or $1$, one could define $\mathbf{P}_k$  in terms of a suitably normalized version of the Hodge Laplacian. (For details, please see \cite{schaub2020random}.)

\section{Methodology}
\label{sec:method}

We now describe the proposed method, {\modelname}, a High-dimensional POint cloud neural NETwork designed to learn expressive representations of an input point set $\mathcal{X} = \{\m{x}_1, \m{x}_2, \ldots, \m{x}_n\} \in \mathbb{R}^d$. 
As summarized in Algorithm~\ref{alg:{\modelname}}, our framework comprises three primary steps: 

\begin{enumerate}
    \item \textbf{Multi-view creation via learnable feature weighting:} We learn multiple sets of weights  $\alpha^{(v)}$, $1\leq v\leq V$, to highlight different feature dimensions to create reweighted point clouds $\tilde{\mathcal{X}}^{(v)}$, which are effectively different views of the data.      

    \item \textbf{Simplicial complex-based point-cloud modeling:} From each $\tilde{\mathcal{X}}^{(v)}$, we construct a Vietoris-Rips complex $\mathcal{S}^{(v)}$~\citep{vr}. This approach allows us to capture not only pairwise relationships but also higher-order interactions within the data.

\begin{figure*}[t]
    \centering
    {\includegraphics[width=0.7\textwidth]{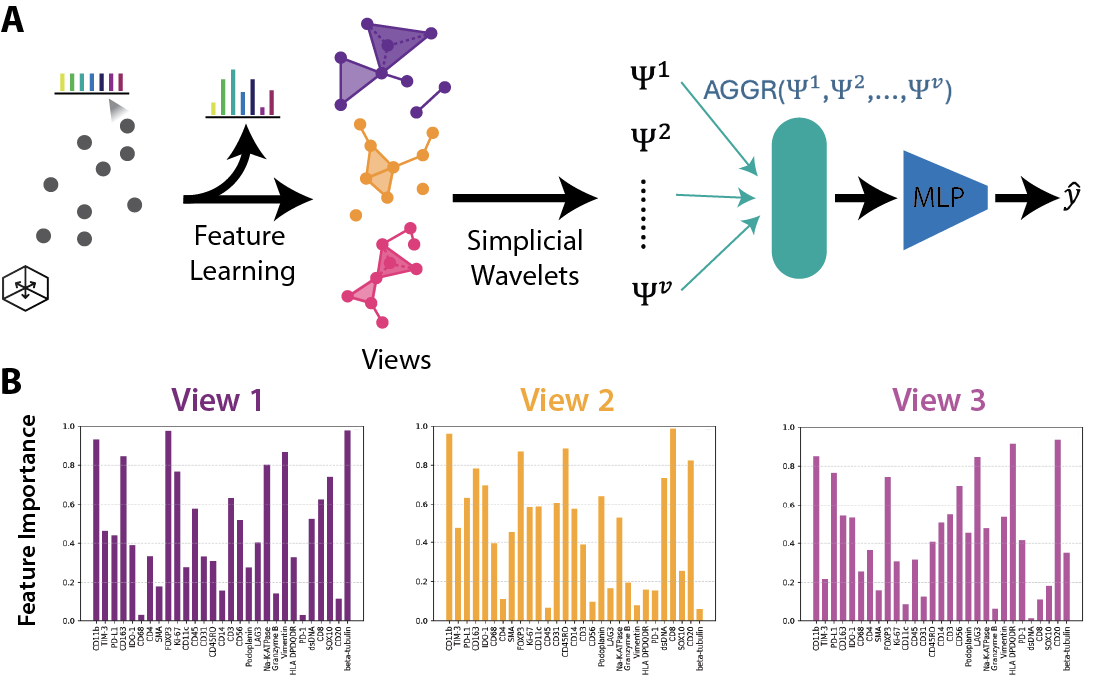}}
    \caption{(A) The {{\modelname}} architecture. (B) Feature weight visualized across three learned views}
    \label{fig:architecture}
\end{figure*}

    \item \textbf{Multi-scale Feature Extraction via Wavelets:} For each simplicial complex $\mathcal{S}^{(v)}$, we  use wavelets to construct simplicial scattering coefficients which form a multi-scale embeddings $\Psi^{(v)}$, that capture both local and global relationships. These embeddings are then concatenated into a single representation $\Phi$ and are passed to a multilayer perceptron  to yield the final predictions $\hat{\mathbf{y}}$. 
\end{enumerate}
 
By combining learnable multi-view feature weighting, simplicial complex construction, and simplicial wavelet transforms, {\modelname} leverages multiple perspectives of the underlying point cloud, at multiple scales to deliver robust representations for downstream classification and prediction. 

\begin{algorithm}
\caption{{\modelname}}\label{alg:{\modelname}}
\SetKwInOut{Input}{Input}
\SetKwInOut{Output}{Output}

\Input{Point cloud $\mathcal{X}$, kernel bandwidth $\sigma$, number of views $V$.\\
Learnable Parameters: learnable weights $\alpha^{(v)}$, MLP $\mathbf{z}$.}
\Output{Predictions $\hat{\mathbf{y}}$.}

\For{$v = 1$ \KwTo $V$}{

    $\tilde{\mathbf{x}}_i^{(v)} = \alpha^{(v)} \odot \mathbf{x}_i$ for $1 \le i \le n$ \;
    $\tilde{\mathcal{X}}^{(v)} = \{\tilde{\mathbf{x}}_i^{(v)}\}_{i=1}^n$ \;
    $\mathcal{S}^{(v)} = \text{Vietoris-Rips}(\tilde{\mathcal{X}}^{(v)}, \epsilon)$ \;
    
    Compute boundary matrices $\mathbf{B}^{(v)}$ from simplices in $\mathcal{S}^{(v)}$ \;
    Set $\mathbf{X}_0^{(v)} = \tilde{\mathcal{X}}^{(v)}$ \tcp*[f]{Vertex features} \;
    \For{$k = 0$ \KwTo $K-1$}{
        $\mathbf{X}_{k+1}^{(v)} = (\mathbf{B}_k^{(v)})^\top \mathbf{X}_k^{(v)}$ \tcp*[f]{Features for higher-order simplices}
    }
    
    \For{$k = 0$ \KwTo $K$}{
        Compute transition matrix: $\mathbf{P}_k^{(v)} = \Delta_k^{(v)} \mathbf{D}_k^{-1}$ \;
        \For{$j = 1$ \KwTo $J$}{
            $\Psi_k^{(v, j)} \mathbf{X}_k^{(v)} = \left( \left(\mathbf{P}_k^{(v)}\right)^{2^j} - \left(\mathbf{P}_k^{(v)}\right)^{2^{j-1}} \right) \mathbf{X}_k^{(v)}$ \;
        }
        Compute Scattering Coefficients:\\ $S^v_k[j]\mathbf{X}_k^{(v)}=|\Psi_k^{(v,j)} \mathbf{X}_k^{(v)}|$ and $
    S^v_k[j,j']\mathbf{X}_k^{(v)}=|\Psi_k^{(v,j')}|\Psi_k^{(v,j)} \mathbf{X}_k^{(v)}||$
    }
    
    Aggregate features across simplex orders: 
    $|S(\mathbf{B}^{(v)}, \mathbf{X}^{(v)})| = \left\{ |S_0^{(v)} \mathbf{X}_0^{(v)}|, \dots, |S_K^{(v)} \mathbf{X}_K^{(v)}| \right\}$\;
}

\textbf{Final Aggregation and Prediction:} \\
$\Phi = \text{Aggr}\{S(\mathbf{B}^{(1)}, \mathbf{X}^{(1)}), \dots, S(\mathbf{B}^{(V)}, \mathbf{X}^{(V)})\}$ \;
$\hat{\mathbf{y}} = \mathbf{z}(\Phi)$ \;
\Return{$\hat{\mathbf{y}}$}
\end{algorithm}

\noindent\textbf{Multi-view creation via learnable feature weighting:}
\label{sec:featlearning}
In many scenarios, such as scRNA-seq, feature vectors can encode critical properties, such as gene expression levels. However, not all dimensions (genes) contribute equally to downstream tasks, and irrelevant or noisy features can impair model performance, especially in high-dimensional settings.
To address this, we introduce a learnable weighting mechanism that dynamically adjusts the weight of each feature dimension. Concretely, we define $V$ distinct weight vectors \(\alpha^{(v)}\), \(1 \le v \le V\), each as a parameter of the model. For the $v$-th view, each point \(\mathbf{x}_i\) is rescaled to $$\tilde{\mathbf{x}}_i^{(v)} = \alpha^{(v)} \odot \mathbf{x}_i = \bigl[\alpha^{(v)}_1 x_{i1}, \ \alpha^{(v)}_2 x_{i2}, \dots, \alpha^{(v)}_d x_{id}\bigr], $$ where \(\odot\) denotes the Hadamard (elementwise) product. We let  $\tilde{\mathcal{X}}^{(v)} = \{\tilde{\mathbf{x}}_1^{(v)}, \tilde{\mathbf{x}}_2^{(v)}, \ldots, \tilde{\mathbf{x}}_n^{(v)}\}$ denote the reweighted point cloud. The reweighting operations amplify the more informative dimensions for the task at hand while downplaying less relevant features, effectively reducing the need for manual feature engineering. In the context of single-cell data, where the feature space can be vast, such automated reweighting may be crucial for isolating key gene expressions that drive improved downstream performance. The learned weights can also be used for other biological insights, such as which gene markers are important for cancer diagnosis as illustrated in Figure \ref{fig:architecture}B.

\noindent\textbf{Simplicial learning from piont cloud data}
\label{sec:simplicial}
For each reweighted set $\tilde{\mathcal{X}}^{(v)}$, we create a simplicial complex $\mathcal{S}^{(v)}$. To do this, we first define  kernelized distances between two points $\tilde{\mathbf{x}}^{(v)}_i$ and $\tilde{\mathbf{x}}^{(v)}_j$ as:
$$d^{(v)}_{i,j} = \exp\left(\frac{-\|\tilde{\mathbf{x}}^{(v)}_i-\tilde{\mathbf{x}}^{(v)}_j\|_2^2}{2\sigma^2}\right).$$ 
We then fix a scale parameter, $\epsilon$, and construct the Vietoris-Rips Complex $\mathcal{S}^{(v)}$, a simplicial complex defined by the rule that \(\sigma_k = \{\m{X}_{i_0}^{(v)}, \m{X}_{i_1}^{(v)}, \ldots, \m{X}_{i_{k}}^{(v)}\}\) is an element of $\mathcal{S}^{(v)}$ if $d_{i_p,i_q}^{(v)}\leq \epsilon$ for all $1\leq p,q\leq k$.
After constructing the simpicial complex $\mathcal{S}^{(v)}$, we construct the boundary operators $\mathbf{B}^{(v)}$. We then define feature matrices by  $\mathbf{X}_0^{(v)}=\tilde{\mathcal{X}}^{(v)}$ (i.e., the $i$-th row of $\mathbf{X}_0^{(v)}$ is $\tilde{\mathbf{x}}^{(v)}_i$) and then $\mathbf{X}_{k+1}^{(v)}=(\mathbf{B}^{(v)}_k)^T\mathbf{X}_{k}^{(v)}$.  In our code, we use a custom implementation of the Vietoris-Rips filtration that enables us to make simplicial construction differentiable, making {\modelname} end-to-end trainable.


We repeat this process $V$ times, and construct ${\m{B}}^{(v)}$ for each view $1\leq v\leq V$. By constructing an ensemble of simplicial complexes $\mathcal{S}^{(1)}, \mathcal{S}^{(2)}, \dots, \mathcal{S}^{(V)}$, our method integrates $V$ distinct perspectives of the original high-dimensional point cloud. Constructing a simplicial complex \(\mathcal{S}^{(v)}\) for each reweighted set \(\tilde{\mathcal{X}}^{(v)}\) allows the model to disentangle and analyze distinct subspaces or biological processes captured by each view. For instance, 
each complex may focus on a different cellular process, capturing unique local and global structures that is overlooked when relying on a single representation. Consequently, subsequent learning stages benefit from a richer set of structural cues, enhancing the overall representation quality for downstream tasks. 

\noindent\textbf{Multi-scale feature extraction via simplicial wavelets}
After the simplicial complexes have been constructed, we can leverage geometric deep learning methods to compute representations for the point cloud. Although message passing neural networks (MPNNs) for simplicial complexes~\cite{generalizedsann, mpsn, SAT} have been introduced, they often suffer from oversmoothing effects~\cite{balcilar2021analyzing}, where representations become indistinguishable after multiple layers. In addition, oversquashing~\cite{alon2021on} is another constraint of such message-passing frameworks. As the receptive field of each node grows, large amounts of information from distant nodes are compressed or ``squashed" into a fixed-size vector, leading to loss of important information between the nodes. This limits the ability of MPNNs to model long-range dependencies. 

In order to address these shortcomings, we employ the {simplicial wavelet transform}~\cite{simplicial-scattering,saito2023multiscale,saito2024multiscale} (SWT). The SWT 
is a multi-scale feature extractor for processing a signal $\mathbf{X}$ defined on the simplices. 
These wavelets are based on diffusion wavelets introduced in \cite{coifman_diffusion_2006-1} which utilize random walk matrices at different time scales. In SWT, the random walk matrices are calculated first as per Eq~\ref{eq:random-walk}. Then, features are iteratively aggregated over all the neighborhoods and stored for each scale $j$. Finally, a difference between different scales is taken to extract multiscale features. 

The random walk diffusion matrix plays  a crucial role in the SWT. 
For each view $v$, 
we interpret each $\left(\mathbf{P}_k^{(v)}\right)^j\m{X}_k^{(v)}$ as diffused node features at scale $j$. Our wavelets, defined below, can be thought of as  measuring the differences between these diffused features at different scales. Alternatively, from a signal processing perspective, these  wavelets can be interpretted as a band-pass filters, with each $j$  highlighting different frequency bands within the signal. 

Formally, the simplicial wavelet transform at scale $j$, $0\leq j\leq J$, is defined as the difference between diffusion at consecutive scales: \begin{equation} \Psi_k^{(v,j)} \mathbf{X}_k^{(v)}= \left(\left(\mathbf{P}_k^{(v)}\right)^j-\left(\mathbf{P}_k^{(v)}\right)^{j-1}\right)\mathbf{X}_k^{(v)}.
\end{equation} 

After applying SWT for multiple scales $j$, 
we apply a non-linearity, which we choose to be the absolute value $|\cdot|$, inspired by the geometric scattering transform \cite{gaoscattering,gama2018diffusion,zou2020graph,perlmutter2023understanding}, a multilayered feedforward network constructed using diffusion wavelets on graphs. We then repeat this process with second  wavelet, at a different scale $j'>j$, and then another absolute value. This yields first- and second-order \emph{simplicial scattering coefficients} of the form
$
S^v_k[j]\mathbf{X}_k^{(v)}=|\Psi_k^{(v,j)} \mathbf{X}_k^{(v)}|\quad\text{and}\quad S^v_k[j,j']\mathbf{X}_k^{(v)}=|\Psi_k^{(v,j')}|\Psi_k^{(v,j)} \mathbf{X}_k^{(v)}||.
$
We then denote the collection of all first- and second-order scattering coefficients for each view by 
$
|S(\mathbf{B}^{(v)}, \m{X}^{(v)})| = \{S^v_k[j]\mathbf{X}_k^{(v)}\}_{\substack{0\leq j\leq J,\\1\leq k\leq K}}\cup\{S^v_k[j,j']\mathbf{X}_k^{(v)}\}_{\substack{0\leq j\leq j'\leq J,\\1\leq k\leq K}}.$
After extracting features for each view $v$, we aggregate the features over views as:
\begin{equation*}
\Phi = \text{Aggr}\{S(\mathbf{B}^{(1)}, \m{X}^{(1)}),  S(\mathbf{B}^{(2)}, \m{X}^{(2)}), \dots, S(\mathbf{B}^{(V)}, \m{X}^{(V)})\}.
\end{equation*}

\(\Phi\) is then processed  by a multilayer perceptron $\mathbf{z}$ to generate our final predictions, $\hat{\mathbf{y}} = \mathbf{z}(\Phi).$ 

\section{Theoretical results}

In this section, we provide theoretical motivation for our model. Specifically, we first show that heat diffusion on simplices captures the 0-homology of the point cloud. Then, we show that simplicial complexes can be expanded into a graph, and the heat equation on the simplicial complex agrees with the heat equation on the equivalent graph. Then, we show that the diffusion operators can capture geodesic distances. Proofs, as well as more detailed theorem statements, are provided in the appendix. 

\noindent\textbf{Heat Diffusion and Connectivity.}
A simplicial complex is said to be connected if any two simplices can be linked by a sequence of simplices that share common faces. More formally, if for any two simplices $\sigma$ and $\tau$, there exists $m\geq 0$ such that $\tau \in \mathcal{N}_m(\sigma)$.
If a simplicial complex is not connected, it can be decomposed into a union of disjoint connected components, each of which is a maximal connected subcomplex.

The following theorem demonstrates that the solution to the heat equation on a simplicial complex remains confined to the connected components where the initial condition is supported. This reflects the intuitive idea that heat cannot diffuse across disconnected regions of the complex. It also aligns with the concept of $0$-homology in algebraic topology since the connected components of a simplicial complex correspond to the generators of its $0$-homology group. Thus, the heat equation's confinement to connected components ensures that diffusion dynamics respect the $0$-homology structure of the simplicial complex. The proof of Theorem~\ref{prop:connected_components_simplicial} is available in Appendix~\ref{sec:connected_components_simplicial}.

\begin{theorem}
\label{prop:connected_components_simplicial}
The heat equations,  Eq~\ref{eqn:heat_eqn_simplicial_complex}, respect the $0$-homology structure of the simplicial complex.
\end{theorem}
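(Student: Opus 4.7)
The plan is to show that the Hodge Laplacian $\Delta_k$ decomposes as a block-diagonal operator with respect to the partition of $\mathcal{S}$ into connected components, and then to transport this block structure to the heat semigroup $e^{-t\Delta_k}$.

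First, I would formalize the decomposition. Write $\mathcal{S} = \bigsqcup_{c=1}^C \mathcal{S}_c$ as a disjoint union of its maximal connected subcomplexes (using the notion of connectivity defined in the paragraph preceding the theorem via the neighborhoods $\mathcal{N}_m$). Standard algebraic topology gives $\dim H_0(\mathcal{S};\mathbb{R}) = C$, so the partition into components is exactly what is meant by the $0$-homology structure. With a suitable ordering of simplices that groups those in each $\mathcal{S}_c$ together, the goal is to show that $u_k(t)$ preserves the direct-sum decomposition $\mathbb{R}^{N_k} = \bigoplus_c \mathbb{R}^{N_k(c)}$.

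Next I would establish the block structure of the boundary matrices. The key observation is that if $\sigma_k \in \mathcal{S}_c$ and $\tau_{k-1}$ is a face of $\sigma_k$, then $\tau_{k-1}$ is also in $\mathcal{S}_c$: they share the face $\tau_{k-1}$ itself, so $\tau_{k-1} \in \mathcal{N}_1(\sigma_k)$, hence they lie in the same connected component. Consequently the $(i,j)$-entry of $\mathbf{B}_k$ vanishes whenever the simplices lie in different components, so $\mathbf{B}_k = \bigoplus_c \mathbf{B}_k^{(c)}$ in block form. Plugging this into Eq.~\ref{eq:hodge-Laplacian} immediately gives $\Delta_k = \bigoplus_c \Delta_k^{(c)}$, where $\Delta_k^{(c)}$ is the Hodge Laplacian of $\mathcal{S}_c$.

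From block-diagonality of $\Delta_k$ it follows that the heat semigroup factorizes as $e^{-t\Delta_k} = \bigoplus_c e^{-t\Delta_k^{(c)}}$, either by series expansion or by uniqueness of solutions to the heat equation. Therefore, if the initial condition $u_k(\cdot,0)$ is supported on a single component $\mathcal{S}_c$, then $u_k(\sigma_k,t) = 0$ for every $\sigma_k \notin \mathcal{S}_c$ and all $t \ge 0$. In particular, heat never leaks between components, and the indicator functions of the components are preserved (in fact, fixed) by the dynamics, mirroring the fact that they span $\ker\Delta_0 \cong H_0(\mathcal{S};\mathbb{R})$ via the Hodge isomorphism.

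I expect the only real subtlety to be verifying that face/coface adjacency suffices to link simplices in the same connected component (so that boundary entries across components truly vanish); this is a short argument using the recursive definition of $\mathcal{N}_m$, but it must be stated carefully to cover all four types of neighbors listed in Section~\ref{sec: background}. Everything else reduces to the standard linear-algebra fact that functions of block-diagonal matrices are block-diagonal.
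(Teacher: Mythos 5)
Your proposal is correct and rests on the same key fact as the paper's proof: the Hodge Laplacian $\Delta_k$ only couples simplices that are upper/lower adjacent, hence lie in the same connected component, so the heat flow cannot transfer mass between components. The paper packages this as a locality argument (define $\tilde u_k$ to be $u_k$ on the support component and $0$ elsewhere, check it solves the heat equation, invoke uniqueness), whereas you make the block-diagonal structure of $\mathbf{B}_k$ and $\Delta_k$ explicit and pass it to $e^{-t\Delta_k}$ --- a presentational difference only.
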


\begin{proof}[Proof Sketch] The heat equation is governed by $\Delta_k$, which acts locally by coupling  neighbors. 
This ensures that heat diffusion cannot propagate between  connected components.
\end{proof}

\noindent\textbf{Heat Diffusion on Simplicial Graphs.}
In this section, we demonstrate that the solution to the heat equation on a simplicial complex $\mathcal{S}$ agrees with the solution to the heat equation on an associated simplicial graph, i.e., a graph where the vertices represent simplices of all orders and edges encode upper and lower adjacencies. This construction allows us to equate heat diffusion on the simplicial complex as heat diffusion on an associated graph, providing a simplified framework for analysis.  The proof of Theorem~\ref{prop:heat_equation_agreement} is in Appendix~\ref{sec:heat_equation_agreement}. 

\begin{definition}[Simplicial Graph]
\label{def:simplicial_graph} 
Let $\mathcal{S}$ be a simplicial complex. 
The associated simplicial graph $\mathcal{G}(\mathcal{S})$ is a graph whose vertices are given by  $V(\mathcal{G})=\mathcal{S}$ and whose  edge set $E(\mathcal{G})$ consists of pairs of simplices which are either upper or lower adjacent, as defined in Section \ref{sec: background}, that is,
$$E(\mathcal{G}) = \big\{ \{\sigma, \sigma'\} \mid \sigma, \sigma' \in V(\mathcal{G}), \, \sigma' \in \mathcal{N}_{\mathcal{L}}(\sigma)\cup\mathcal{N}_{\mathcal{U}}(\sigma)\big\}.$$
\end{definition}

\begin{theorem}
\label{prop:heat_equation_agreement}
The heat equation on a simplicial complex $\mathcal{S}$ agrees with the heat equation on the associated simplicial graph $\mathcal{G}(\mathcal{S})$, defined in terms of $\Delta_{\mathcal{G}}=B(\mathcal{G}(\mathcal{S}))B(\mathcal{G}(\mathcal{S}))^T$, where $B(\mathcal{G}(\mathcal{S}))$ is the incidence matrix of $\mathcal{G}(\mathcal{S})$.
\end{theorem}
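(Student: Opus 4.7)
The plan is to establish the matrix identity $\Delta_\mathcal{G} = \mathrm{diag}(\Delta_0, \ldots, \Delta_K)$, after which the claimed agreement of heat equations follows immediately: identifying a function $u$ on $V(\mathcal{G}) = \mathcal{S}$ with its tuple of restrictions $(u_0, \ldots, u_K)$ to each simplex order, the equation $\partial_t u = -\Delta_\mathcal{G} u$ on $\mathcal{G}(\mathcal{S})$ decouples into $\partial_t u_k = -\Delta_k u_k$ for each $k$, matching Eq.~\ref{eqn:heat_eqn_simplicial_complex}.

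First I would exploit the crucial observation that lower and upper adjacency, as defined earlier, relate only simplices of the same order. Consequently, $E(\mathcal{G}) = \bigsqcup_k E_k$ where $E_k$ contains only edges between $k$-simplices, and $\mathcal{G}(\mathcal{S})$ decomposes as a disjoint union of subgraphs $\mathcal{G}_k$ indexed by $k$. After reordering $V(\mathcal{G})$ by simplex order, the incidence matrix $B(\mathcal{G}(\mathcal{S}))$ becomes block diagonal, and hence so does $\Delta_\mathcal{G} = B(\mathcal{G}(\mathcal{S}))B(\mathcal{G}(\mathcal{S}))^T$, with $k$-th block equal to the graph Laplacian $\Delta_{\mathcal{G}_k}$ of $\mathcal{G}_k$.

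Next, I would establish the order-by-order identity $\Delta_{\mathcal{G}_k} = \Delta_k = \mathbf{B}_k^T \mathbf{B}_k + \mathbf{B}_{k+1} \mathbf{B}_{k+1}^T$. Unwinding the definitions, for distinct $k$-simplices $\sigma,\tau$ the entry $(\mathbf{B}_k^T \mathbf{B}_k)_{\sigma,\tau}$ is nonzero precisely when they share a common $(k-1)$-face, i.e.\ are lower adjacent, while $(\mathbf{B}_{k+1} \mathbf{B}_{k+1}^T)_{\sigma,\tau}$ is nonzero precisely when they share a common $(k+1)$-coface, i.e.\ are upper adjacent; the diagonal entries capture the corresponding face/coface degrees. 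By parametrizing $E_k$ as a multiset indexed by one column per shared $(k-1)$-face and one per shared $(k+1)$-coface, taking signs from the columns of $\mathbf{B}_k$ and $\mathbf{B}_{k+1}$ in the oriented setting, the incidence matrix of $\mathcal{G}_k$ can be written as the horizontal concatenation $[\mathbf{B}_k^T \mid \mathbf{B}_{k+1}]$ up to sign conventions, from which $\Delta_{\mathcal{G}_k} = \mathbf{B}_k^T \mathbf{B}_k + \mathbf{B}_{k+1} \mathbf{B}_{k+1}^T$ drops out.

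The main obstacle is precisely this bookkeeping step: a single pair of $k$-simplices can be simultaneously upper and lower adjacent (e.g.\ in a flag complex, lower adjacency together with $\sigma \cup \tau \in \mathcal{S}$ yields upper adjacency), and the set-based definition in Definition~\ref{def:simplicial_graph} would collapse two distinct adjacency relations into a single edge, yielding the wrong off-diagonal contribution to $\Delta_\mathcal{G}$. The cleanest resolution is to interpret $\mathcal{G}(\mathcal{S})$ as a multigraph, with edges indexed by adjacency \emph{relations} (pairs together with the certifying shared face or coface) rather than by pairs alone, so that each face/coface contributes its own column to $B(\mathcal{G}(\mathcal{S}))$. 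With this convention and an orientation-compatible choice of edge signs, the block-diagonal identity holds by direct entrywise comparison, and the two heat equations agree.
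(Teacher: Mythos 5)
Your proposal follows essentially the same route as the paper's proof: order the vertices of $\mathcal{G}(\mathcal{S})$ by simplex order, note that upper and lower adjacency never mix orders so that $\Delta_{\mathcal{G}}$ is block diagonal with $k$-th block the Hodge Laplacian $\Delta_k$, and conclude that the graph heat equation decouples into the $K+1$ simplicial heat equations (the paper finishes with uniqueness of solutions, which your decoupling uses implicitly). The genuine difference is that the paper simply \emph{asserts} the identity $\Delta_{\mathcal{G}}=\mathrm{diag}(\Delta_0,\dots,\Delta_K)$ in Eq.~\ref{eq:DeltaG}, whereas you attempt to prove it and correctly observe that it fails for the literal simple graph of Definition~\ref{def:simplicial_graph}: a pair of $k$-simplices that is simultaneously upper and lower adjacent contributes $2$ to the off-diagonal of $\Delta_k$ but only a single edge to $\mathcal{G}(\mathcal{S})$. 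That is a real and worthwhile observation about a step the paper leaves unjustified.

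One caution, however: your multigraph fix with edges indexed by pairs-together-with-certificates still does not reproduce $\Delta_k$. A $(k-1)$-face $\rho$ contained in $r\ge 3$ of the $k$-simplices generates $\binom{r}{2}$ such edges and hence contributes $r-1$ to the degree of each incident $k$-simplex, while $(\mathbf{B}_k^\top\mathbf{B}_k)_{\sigma\sigma}$ counts $\rho$ exactly once; the same issue arises for cofaces. Already for a single filled triangle, each edge $\sigma$ has $(\Delta_1)_{\sigma\sigma}=2+1=3$, whereas the simple graph gives degree $2$ and your pair-indexed multigraph gives $4$. The clean resolution is to read $B(\mathcal{G}(\mathcal{S}))$ as the block matrix whose $k$-th block is $\bigl[\mathbf{B}_k^\top \mid \mathbf{B}_{k+1}\bigr]$, i.e.\ one column per shared face or coface (a hyperedge when a face has three or more cofacets), for which $B(\mathcal{G}(\mathcal{S}))B(\mathcal{G}(\mathcal{S}))^\top=\mathrm{diag}(\Delta_0,\dots,\Delta_K)$ is immediate; with that reading your decoupling argument goes through verbatim and coincides with the paper's.
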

\begin{proof}[Proof Sketch]
The graph Laplacian $\Delta_{\mathcal{G}}$ on the simplicial graph can be represented as a block diagonal matrix with the  Hodge Laplacians $\Delta_k$ as  the diagonal blocks. 
\end{proof}

\noindent\textbf{Heat Solution Captures Geometry.} Most of our experiments focus on  single-cell data which are known to satisfy the manifold hypotheses~\cite{heat-geo, manifold-assumption1, manifold-assumption2, VENKAT2023551, Ahlmann-Eltze2025}. That is, the data points $\mathbf{x}_i$ lie upon some low-dimensional manifold. More specifically, we interpret each of the transformed point clouds $\tilde{\mathcal{X}}^{(v)}$ as corresponding to different submanifolds of some global cellular manifold.  The following theorem establishes that the diffusion operator on the simplicial complex can approximate geodesic distances 
on these underlying manifolds. The proof of Theorem~\ref{thm:geodesicdist} is  in Appendix~\ref{sec:geodesicdist}. 

\begin{theorem}~\label{thm:geodesicdist} 
Assume that a transformed point cloud $\tilde{\mathcal{X}}^{(v)}$ lies upon a Riemannian manifold $\mathcal{M}$.    Then, the $0$-th order Hodge Laplacian $\Delta_0$ on the associated oriented simplicial complex can approximate geodesic distances on the underlying manifold.
\end{theorem}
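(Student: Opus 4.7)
The plan is to reduce the claim to a chain of well-known manifold-learning results, anchored on the observation from the background section that on the 0-simplices of an oriented complex, $\Delta_0$ coincides with the ordinary graph Laplacian on the 1-skeleton, and then invoke Varadhan's asymptotic formula for the heat kernel on $\mathcal{M}$.

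First I would set up the discrete-to-continuous bridge. Assuming $\tilde{\mathcal{X}}^{(v)} = \{\tilde{\mathbf{x}}_i^{(v)}\}$ is sampled from a density on $\mathcal{M}$ and that the Vietoris--Rips construction uses the Gaussian-kernel distances $d_{i,j}^{(v)}$ defined in Section~\ref{sec:method}, the matrix $\Delta_0$ is (up to the standard degree normalization used for $\mathbf{P}_0$) the graph Laplacian built from a Gaussian affinity kernel of bandwidth $\sigma$. Invoking the convergence results of Coifman--Lafon~\cite{coifman2006diffusionmaps} (together with related results of Belkin--Niyogi, etc.), I would state that, as $n\to\infty$ and $\sigma \to 0$ at appropriate rates, the normalized graph Laplacian converges pointwise (and in operator norm on sufficiently smooth test functions) to the Laplace--Beltrami operator $\Delta_{\mathcal{M}}$ on $\mathcal{M}$. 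Correspondingly, powers of the transition matrix $\mathbf{P}_0$ converge to the continuous heat semigroup $e^{-t\Delta_{\mathcal{M}}}$ with $t \asymp \sigma^{2} j$ for the $j$-th power.

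Next I would invoke Varadhan's short-time asymptotic formula: for the heat kernel $p_t(x,y)$ on a complete Riemannian manifold,
\begin{equation*}
d_{\mathcal{M}}(x,y)^2 = \lim_{t\to 0^+} -4t \log p_t(x,y).
\end{equation*}
Combining this with the previous step, I would argue that for two sample points $\tilde{\mathbf{x}}_i^{(v)}, \tilde{\mathbf{x}}_j^{(v)}$,
\begin{equation*}
d_{\mathcal{M}}\bigl(\tilde{\mathbf{x}}_i^{(v)}, \tilde{\mathbf{x}}_j^{(v)}\bigr)^2 \;\approx\; -4t \log\!\bigl[(\mathbf{P}_0)^{j}\bigr]_{ij}
\end{equation*}
in the scaling regime $t \asymp \sigma^2 j$, so the geodesic distance is recoverable from entries of diffusion operator powers, and hence from the spectral/exponential action of $\Delta_0$. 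An alternative route I would mention is the diffusion-distance framework of Coifman--Lafon, which gives an isometric embedding of the heat-semigroup action and whose Euclidean distances converge to distances derived from $d_{\mathcal{M}}$; either yields the same qualitative conclusion.

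The main obstacle is making the dual limit ($n\to\infty$, $\sigma\to 0$, $t\to 0$) precise enough to justify the approximation uniformly in $i,j$ while respecting the constraint that $(\mathbf{P}_0)^j$ only reaches points within the diffusion horizon determined by $\sigma$ and $j$. This is essentially a question of keeping $t$ small enough for Varadhan's formula to be tight, yet large enough that the discrete diffusion has had time to resolve the geodesic between $\tilde{\mathbf{x}}_i^{(v)}$ and $\tilde{\mathbf{x}}_j^{(v)}$; I would handle it by restricting to pairs whose geodesic distance is $O(\sqrt{t})$, or by iterating the approximation along short geodesic hops and summing, which matches the level of rigor typically used in the manifold-learning literature and suffices for the statement that $\Delta_0$ \emph{can approximate} geodesic distances.
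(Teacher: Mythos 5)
Your proposal follows essentially the same route as the paper's proof: identify $\Delta_0$ with a graph Laplacian, pass to the manifold heat kernel via a discrete-to-continuum convergence result, and recover geodesic distance from Varadhan's formula $d_{\mathcal{M}}(x,y)^2 = \lim_{t\to 0^+} -4t\log p_t(x,y)$. The only substantive difference is that the paper invokes the uniform heat-kernel convergence of Theorem 3 in \cite{dunson2021spectral} directly (together with the simplicial-to-graph heat equation equivalence), whereas you route through pointwise Coifman--Lafon convergence of the graph Laplacian to $\Delta_{\mathcal{M}}$ and then to the semigroup --- a slightly longer chain that technically requires spectral convergence (not just pointwise operator convergence) to control the heat kernel, which is precisely what the cited result of Dunson et al.\ supplies.
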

\begin{proof}[Proof Sketch]
The result follows from Varadhan's formula~\citep{varadhan}, which shows that distances may be computed via the heat kernel, as well as Theorem 3 of \cite{dunson2021spectral}, which analyzes the convergence of the graph heat kernel to the manifold heat kernel as well as Theorem \ref{prop:heat_equation_agreement} which relates the graph heat equation to the simplicial complex heat equation.%
\end{proof}

This theorem formalizes the link between diffusion processes, e.g.,  the heat equation, and the geometric structure of the data manifold. 
Notably, computation of geodesic distances on manifolds is computationally expensive and often infeasible in high dimensions. By using diffusion-based approximations, we can efficiently estimate geodesic distances with discrete operators.
Additionally, we note that there is a long literature (dating back to at least \cite{coifman_diffusion_2006-1}) relating the Laplacian and the diffusion operator to the geometry of the data manifold, including quantities such as curvature~\cite{diff-curvature}. Most relevant to our work are the results which relate the asymptotic expansion of the trace of the heat kernel to geometric quantities like dimension, volume, and total scalar curvature (see for instance~\cite{gordon-diff-geometry}). Indeed, following immediately from Theorems~\ref{prop:heat_equation_agreement} and  \ref{thm:geodesicdist} we have:

\begin{corollary}~\label{col:vol}
    The equivalence between the heat kernel on $\mathcal{S}$ and $\mathcal{G}(\mathcal{S})$ implies the equivalence of dimension, volume, and total scalar curvature on the respective underlying data manifolds. 
\end{corollary}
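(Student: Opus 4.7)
The plan is to combine the two preceding results with the classical Minakshisundaram--Pleijel short-time asymptotic expansion of the heat kernel trace on a compact Riemannian manifold. Recall that for a $d$-dimensional closed Riemannian manifold $\mathcal{M}$ one has, as $t\to 0^+$,
\begin{equation*}
\mathrm{Tr}\bigl(e^{-t\Delta_{\mathcal{M}}}\bigr) \;\sim\; (4\pi t)^{-d/2}\sum_{k\geq 0} a_k(\mathcal{M})\, t^k,
\end{equation*}
where $a_0=\mathrm{Vol}(\mathcal{M})$, $a_1=\tfrac{1}{6}\int_{\mathcal{M}} R\,dV$, and so on. The leading exponent of $t$ encodes the dimension, the coefficient $a_0$ the volume, and $a_1$ the total scalar curvature. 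Hence if two manifolds have heat kernel traces that agree to sufficiently high order, the three invariants in question must coincide.

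The first step of the proof would be to apply Theorem~\ref{prop:heat_equation_agreement} to identify the heat semigroup on the simplicial complex $\mathcal{S}$ with the graph heat semigroup on $\mathcal{G}(\mathcal{S})$: because the two evolution equations coincide, so do their fundamental solutions and, consequently, their traces. The second step is to invoke Theorem~\ref{thm:geodesicdist} together with the convergence theorem of \cite{dunson2021spectral} on which it rests; this transfers the discrete heat kernel on the simplicial graph to the continuous heat kernel of the Laplace--Beltrami operator on the underlying data manifold $\mathcal{M}$. Applied on both sides of the $\mathcal{S}\leftrightarrow\mathcal{G}(\mathcal{S})$ correspondence, this yields an identification of the two manifold-level heat traces. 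The third step is to expand each trace using the Minakshisundaram--Pleijel formula and match coefficients: agreement at leading order in $t$ gives equality of dimensions, $a_0$ gives equality of volumes, and $a_1$ gives equality of total scalar curvatures.

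The main obstacle will be the spectral-convergence step. Theorem~\ref{thm:geodesicdist} delivers convergence at the level of heat values and geodesic distances, but extracting $a_0$ and $a_1$ from the trace expansion requires control on the full spectral measure of the discrete Laplacian, strong enough that one may interchange the limit-in-discretization with the small-$t$ expansion. I would address this either by upgrading the convergence result to a uniform spectral convergence statement in the regime where the simplicial graph Laplacian approximates $\Delta_{\mathcal{M}}$, or, more conservatively, by restricting the claim to those geometric invariants controlled directly by the heat-kernel asymptotics already justified by Theorems~\ref{prop:heat_equation_agreement} and~\ref{thm:geodesicdist}, leaving the higher-order invariants to a separate quantitative refinement.
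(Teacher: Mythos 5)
Your proposal is correct in outline and shares the same skeleton as the paper's argument -- both proofs chain Theorem~\ref{prop:heat_equation_agreement} (block-diagonal identification of the simplicial and graph heat semigroups) with the spectral convergence result of \cite{dunson2021spectral} used in Theorem~\ref{thm:geodesicdist}, and then read the geometric invariants off the spectrum of the Laplace--Beltrami operator. Where you differ is in the final extraction step. The paper writes the manifold heat kernel in its spectral expansion $h_t(x,y)=\sum_i e^{-\lambda_i t}\phi_i(x)\phi_i(y)$ and appeals to Weyl's law for dimension and volume and to Cheng's eigenvalue comparison theorems for the curvature statement; you instead invoke the Minakshisundaram--Pleijel trace expansion and match the coefficients $a_0$ and $a_1$. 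Your route is arguably the cleaner one for the total scalar curvature: Weyl's law genuinely recovers dimension and volume from the eigenvalue counting function, but Cheng's comparison theorems yield inequalities conditioned on curvature bounds rather than an identity expressing $\int_{\mathcal{M}} R\,dV$ as a spectral invariant, whereas the $a_1$ coefficient of the heat trace does exactly that. (The main text in fact gestures at the heat-trace route by citing the asymptotic expansion literature, so your version aligns with the paper's stated motivation even though the appendix proof takes the Weyl/Cheng path.) You also correctly flag the real gap common to both arguments: passing from pointwise or uniform convergence of the discrete heat kernel to convergence of the trace uniformly enough to interchange the discretization limit with the $t\to 0^+$ expansion is not supplied by the cited convergence theorem, and the paper's proof does not address this either.
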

The proof of Corollary~\ref{col:vol} is available in Appendix Section~\ref{proof:vol}. We use Weyl's law~\cite{weyl} and the eigenvalue comparison theorem~\cite{cheng} to prove the equivalence.

\section{Empirical Results}\label{sec:results}
\label{sec:exp}
We compare the performance of {\modelname} with KNN-GNNs (i.e., GNNs on KNN graphs) as well as PointNet++ and it's variants are described in Appendix~\ref{sec:baselines}. The KNN-based graph neural networks (GCN~\cite{gcn}, SAGE~\cite{sage}, GAT~\cite{gat}, GIN~\cite{gin} and Graph Transformer~\cite{graph-transformers}, TopoGNN~\cite{horn2022topologicalgraphneuralnetworks}) are presented in the first block, followed by point cloud and topology oriented models (DGCNN~\cite{dgcnn}, PointNet++~\cite{pointnet++}, PointTransformer~\cite{pointtransformer}, TopoAE~\cite{pmlr-v119-moor20a}, GCNN~\cite{papillon2025topotuneframeworkgeneralized}, HGCNN~\cite{feng2019hypergraphneuralnetworks}), and finally the proposed {{\modelname}}. We present the mean and standard deviation over 5 seeds\footnote{The code is available at \url{https://anonymous.4open.science/r/PointCloudNet-1EAD}.}. A bolded score indicates the highest overall performance, whereas an underlined value identifies the second-best performance. Unless otherwise stated, we use unoriented simplices in our experiments.   Details on the data cohorts considered in our experiments are provided in Appendix \ref{sec:data}.  Hyperparameters are described in Appendix~\ref{sec:experimental_setup}. We provide a discussion of computational complexity in Appendix \ref{sec: complexity}.
Discussion of limitations and societal impact is provided in Appendix Sections~\ref{sec:limitations} and~\ref{sec:impact}, respectively.

\noindent\textbf{Topology and Geometry Prediction.}
We evaluate {\modelname} on the task of predicting persistence features of the point clouds, which provides a topological summary of the dataset. The ground truth for persistence features was calculated by calculating using persistence diagrams. 
We note that the purpose of these experiments is not to develop a new method of computing persistence features. Instead, it is to show that {\modelname} has the expressivity needed in order to compute such features, indicating that it is a viable method for tasks which require a network to capture the underlying topology of the data.

\begin{wraptable}{r}{0.5\textwidth}  
\captionsetup{font=scriptsize}
\centering
\vspace{5pt}  
\scalebox{0.7}{
\begin{tabular}{c|c|c}
\hline
\textbf{Model} & \textbf{Melanoma} & \textbf{PDO} \\
\hline
KNN-GCN & 1.0683 $\pm$ 0.002 & 1.0454 $\pm$ 0.018 \\
KNN-SAGE & \underline{0.734 $\pm$ 0.031} & 1.0338 $\pm$ 0.010 \\
KNN-GAT & 1.101 $\pm$ 0.028 & \underline{1.068 $\pm$ 0.008} \\
KNN-GIN & 0.850 $\pm$ 0.061 & 1.2605 $\pm$ 0.006 \\
KNN-GraphTransformer & 1.274 $\pm$ 0.038 & 1.3754 $\pm$ 0.006 \\
\hline
DGCNN & 28.412 $\pm$ 0.001 & 1353.0262 $\pm$ 1.12 \\ 
PointNet++ & 28.418 $\pm$ 0.001 & 28.417 $\pm$ 0.005 \\ 
PointTransformer & 28.422 $\pm$ 0.014 & 28.41 $\pm$ 0.003 \\ 
\hline
\modelname{} &  \textbf{0.633 $\pm$ 0.043} & \textbf{0.4046 $\pm$ 0.006} \\
\hline
\end{tabular}
}
\caption{MSE on prediction of persistence features.}
\label{tab:persisfeatpred}
\vspace{-6pt}  
\end{wraptable}

The results in Table~\ref{tab:persisfeatpred} indicate that {\modelname}  outperforms  baseline methods, achieving the lowest MSE across both datasets. Notably, KNN-based models perform competitively, with KNN-SAGE and KNN-GIN showing relatively lower errors compared to other GNN-based approaches. However, point-cloud-based models (DGCNN, PointNet++, and PointTransformer) exhibit significantly higher error values, suggesting that they struggle to capture the necessary topological features for persistence feature prediction on these data sets, perhaps because they were designed with 3D point clouds in mind, rather than high-dimensional point clouds. 

\noindent\textbf{Single-cell data classification.}\label{sec:tasks}
Next we assess the performance of {\modelname} on classifiying point clouds from single-cell data cohorts, 
described in Table \ref{tab:dataset_comparison_transposed}.  (Detailed descriptions of this data is provided in the Appendix~\ref{sec:data}.)
As mentioned in the introduction, we emphasize that each data cohort is a measurement of thousands cells on a large cohort of patients and the label is prediction of outcome (of disease or treatment). 
As we can see in Table~\ref{tab:classification}, {{\modelname}} outperforms all the graph-based, topological, and point cloud methods in Melanoma. On the PDO data, it is second to TopoGNN. However, HiPoNet is much more consistent than TopoGNN having a standard deviation of $0.94\%$ in comparison to TopoGNNs standard deviation of $16.15\%$.

{\tiny
\begin{table}[t]
\centering
\scalebox{0.7}{
\begin{tabular}{lccccc}
\toprule
\textbf{Data cohort} & \textbf{Nuumber of cells} & \textbf{Total \# Datasets} & \textbf{Task} & \textbf{Data Modality} \\
\midrule
  Melanoma patient samples 
  & 61K
  & 54
  & Response to Immunotherapy
  & MIBI\\
  Patient-derived organoids (PDOs)
  & 1.8M
  & 1625
  & Treatment Administered 
  & Mass Cytometry\\
  \midrule
  \multirow{2}{*}{Charville}
  & 196K
  & 196
  & Outcome to chemotherapy
  & \multirow{5}{*}{CODEX}\\
  
  & 196K
  & 196
  & Cancer recurrence \\
  \multirow{2}{*}{UPMC}
  & 308K
  & 308
  & Outcome to chemotherapy
  & \\
  
  & 308K
  & 308
  & Cancer recurrence
  & \\
  DFCI
  & 54K
  & 54
  & Outcome to chemotherapy
  & \\
  \bottomrule
\end{tabular}
}
\caption{Information about single-cell data}
\label{tab:dataset_comparison_transposed}
\end{table}
}

\noindent\textbf{Application to spatial transcriptomics data.}\label{sec:st}
In addition to the experiments on single-cell data, we further demonstrate an application of {\modelname} in analyzing spatial transcriptomics data. Unlike previous experiments where multiple views were constructed in the same manner, here we construct two  views by different methods: one capturing spatial proximity of cells and the other encoding gene expression similarity. Table~\ref{tab:st_res} compares the predictive performance of various models on spatial transcriptomics data drawn from four different cohorts: DFCI, Charville, UPMC, and Melanoma. DFCI and Melanoma include one task-outcome prediction-while Charville and UPMC includes two tasks—either outcome prediction or recurrence prediction. 
Notably, HiPoNet achieves the top results in most settings, outperforming  KNN-based graph neural networks (GCN, SAGE, GAT, GIN), PointNet++, and the KNN-GraphTransformer.

\begin{table}[h]
    \captionsetup{font=scriptsize}
    \centering
    \scalebox{0.7}{\begin{tabular}{c|c|c|c|c|c|c}
    \hline
    \textbf{Data} & DFCI & \multicolumn{2}{c|}{Charville} & \multicolumn{2}{c|}{UPMC}& Melanoma\\
    \hline
    \textbf{Task} & Outcome & Outcome & Recurrence & Outcome & Recurrence & Response\\
    \hline
    KNN-GCN & 0.597 $\pm$ 0.049 & 0.547 $\pm$ 0.010 & \underline{0.642 $\pm$ 0.056} & \textbf{0.668 $\pm$ 0.032} & 0.5 $\pm$ 0.0 & \underline{0.606 $\pm$ 0.13}\\
    KNN-SAGE & \underline{0.82 $\pm$ 0.040} & 0.618 $\pm$ 0.021 & 0.581 $\pm$ 0.016 & 0.631 $\pm$ 0.013 & 0.5 $\pm$ 0.0 & 0.572 $\pm$ 0.05 \\
    KNN-GAT & 0.557 $\pm$ 0.047 & \underline{0.675 $\pm$ 0.051} & 0.530 $\pm$ 0.032 & 0.647 $\pm$ 0.029 & 0.5 $\pm$ 0.0 & 0.567 $\pm$ 0.05 \\
    KNN-GIN & 0.700 $\pm$ 0.048 & 0.609 $\pm$ 0.020 & 0.624 $\pm$ 0.045 & 0.663 $\pm$ 0.015 & \underline{0.514 $\pm$ 0.02} & 0.567 $\pm$ 0.06 \\
    KNN-GraphTransformer & 0.668 $\pm$ 0.051 & 0.578 $\pm$ 0.040 & 0.5 $\pm$ 0.0 & 0.629 $\pm$ 0.00 & 0.5 $\pm$ 0.0 & 0.528 $\pm$ 0.04  \\
    PointNet++ & 0.491 $\pm$ 0.057 & 0.451 $\pm$ 0.078 & 0.499 $\pm$ 0.001 & 0.506 $\pm$ 0.01 & 0.495 $\pm$ 0.00 & 0.503 $\pm$ 0.00  \\ 
    \hline
    HiPoNet & \textbf{0.916 $\pm$ 0.03} & \textbf{0.681 $\pm$ 0.012} & \textbf{0.681 $\pm$ 0.01} & \underline{0.665 $\pm$ 0.01} & \textbf{0.6044 $\pm$ 0.0} & \textbf{0.732 $\pm$ 0.01}\\
    \hline
    \end{tabular}}
    \caption{AUC ROC on Spatial Transcriptomics Classification}
    \label{tab:st_res}
\end{table}

\begin{wraptable}{r}{0.5\textwidth}
\captionsetup{font=scriptsize}
\centering
\vspace{-10pt} 
\scalebox{0.7}{
\begin{tabular}{ccc}
\toprule
\textbf{Model} & \textbf{Melanoma} & \textbf{PDO} \\
\midrule
KNN-GCN  & 72.72 $\pm$ 5.45 & 53.66 $\pm$ 0.70 \\
KNN-SAGE  & 76.36 $\pm$ 6.03 & 55.89 $\pm$ 0.83  \\
KNN-GAT  & 61.81 $\pm$ 4.45 & 52.56 $\pm$ 10.06 \\
KNN-GIN  & \underline{85.45 $\pm$ 3.63} & {57.02 $\pm$ 1.20}\\
KNN-GraphTransformer  & 80.00 $\pm$ 8.90  & 39.46 $\pm$ 2.78 \\
TopoGNN & \underline{88.18 $\pm$ 8.19} & \textbf{79.90 $\pm$ 16.15}\\
\hline
HGNN & 80.00 $\pm$ 7.6 & 38.76 $\pm$ 0.87\\
DGCNN & 63.33 $\pm$ 12.47 & 40.00 $\pm$ 4.36\\ 
PointNet++ & 45.00 $\pm$ 22.42 & 45.24 $\pm$ 2.08\\ 
PointTransformer & 79.99 $\pm$ 6.24 & 30.00 $\pm$ 4.70 \\ 
TopoAE & 52.73 $\pm$ 11.35 & 15.71 $\pm$ 0.87\\
GCNN & 63.63 $\pm$ 0.0 & 29.34 $\pm$ 1.03\\
\hline
\modelname{} & \textbf{90.90 $\pm$ 4.92} & \underline{77.38 $\pm$ 0.94} \\
\bottomrule
\end{tabular}
}
\caption{Accuracies on classification tasks.}
\label{tab:classification}
\vspace{-10pt} 
\end{wraptable}

\noindent\textbf{Intepretability of Learned Features.} \modelname{} offers high degree of interpretability enabled by its multi-view and learnable feature weighting architecture that learns meaningful representations from high-dimensional point cloud data. As seen in the three bar plots in Figure~\ref{fig:architecture}(B), each plot corresponds to a different learned view or representation of the 30 protein expression markers in the melanoma single-cell classification task, with the y-axis including the learning importance of each marker in that specific view. Biologically meaningful markers like CD11b, CD118, and FOXP3 have consistently high importance across the different views, aligning with their roles in immune regulation within the tumor microenvironment and promoting immune evasion in tumors.

\noindent\textbf{Ablations.} We provide ablations in Appendix~\ref{sec:extra-results}. Table~\ref{tab:viewablation} shows that increasing the number of views improves performance, with four views yielding the best results. In Table~\ref{tab:ablation}, we observe that removing multi-view learning or simplicial modeling leads to the largest performance drops, confirming their critical importance. Table~\ref{tab:threshold_sensitivity} presents a sensitivity analysis of the Vietoris–Rips threshold, showing that the optimal threshold varies by dataset and significantly affects accuracy. In Table~\ref{tab:bandwidth_sensitivity}, we analyze the effect of kernel bandwidth, finding that careful tuning is required to avoid oversmoothing. Finally, Table~\ref{tab:transposed_performance} demonstrates that while 1st order simplices (essentially graphs) are  sufficient for some data sets, incorporating higher-order simplices improves performance for others, particularly in predicting clinical outcomes. 

\vspace{-0.25cm}
\section{Conclusion}
In this work, we introduced {HiPoNet}, a novel neural network architecture designed for high-dimensional point cloud datasets. By leveraging multiple views, higher-order constructs, and multi-scale wavelet transforms, {HiPoNet} effectively captures complex geometric and topological structures inherent in biological data. Our experiments demonstrate that {HiPoNet} learns distinct, meaningful representations tailored to each dataset and significantly improves downstream analysis across diverse single-cell and spatial transcriptomics cohorts. These results highlight the importance of integrating higher-order and multi-scale information to overcome the limitations of existing point cloud and graph-based models. 

\section{Acknowledgements}

D.B. received funding from the Yale - Boehringer Ingelheim Biomedical Data Science Fellowship and the Kavli Institute for Neuroscience Postdoctoral Fellowship. M.P. acknowledges funding from The National Science Foundation under grant number OIA-2242769. S.K. is funded in part by the NIH (NIGMSR01GM135929, R01GM130847), NSF CAREER award IIS-2047856, NSF IIS-2403317, NSF DMS-2327211 and NSF CISE-2403317. S.K is also funded by the Sloan Fellowship FG-2021-15883, the Novo Nordisk grant GR112933. S.K. and M.P. also acknowledge funding from NSF DMS-2327211.

\clearpage
\appendix
\pagenumbering{arabic}
\renewcommand*{\thepage}{A\arabic{page}}
\renewcommand{\thefigure}{A\arabic{figure}}
\renewcommand{\thetable}{A\arabic{table}}
\setcounter{figure}{0}
\setcounter{table}{0}

\section{Notations}\label{sec:notations}
\begin{table}[ht]
\centering
\caption{Notations used throughout the paper.}
\renewcommand{\arraystretch}{1.3}
\begin{tabular}{cl}
\toprule
\textbf{Notation} & \textbf{Description} \\
\midrule
$\mathcal{X} = \{\mathbf{x}_1, \dots, \mathbf{x}_n\}$ & Original point cloud with $n$ data points \\
$d$ & Input feature dimension (e.g., number of genes or markers) \\
$V$ & Number of distinct views \\
$\alpha^{(v)} \in \mathbb{R}^d$ & Learnable weight vector for view $v$ \\
$\tilde{\mathbf{x}}_i^{(v)}$ & Reweighted feature vector for point $i$ in view $v$ \\
$\tilde{\mathcal{X}}^{(v)}$ & Reweighted point cloud for view $v$ \\
$d_{ij}^{(v)}$ & Gaussian Kernel distance between points $i$ and $j$ in view $v$ \\
$\epsilon$ & Scale parameter for Vietoris-Rips complex construction \\
$\mathcal{S}^{(v)}$ & Simplicial complex for view $v$ \\
$K$ & Maximum simplex order in the complex \\
$\sigma_k$ & A $k$-simplex in the complex \\
$N_k$ & Number of $k$-simplices \\
$\mathbf{X}_k^{(v)} \in \mathbb{R}^{N_k \times D_k}$ & Feature matrix for $k$-simplices in view $v$ \\
$\mathbf{B}_k^{(v)}$ & Boundary matrix for order-$k$ simplices in view $v$ \\
$\Delta_k^{(v)}$ & Hodge Laplacian for $k$-simplices in view $v$ \\
$\mathbf{P}_k^{(v)}$ & Simplicial random walk matrix for $k$-simplices in view $v$ \\
$J$ & Number of diffusion scales used in the scattering transform \\
$\Psi_k^{(v,j)}$ & Simplicial wavelet at scale $j$ for order-$k$ simplices in view $v$ \\
$S^v_k[j]\mathbf{X}_k^{(v)}$ & First-order scattering coefficients \\
$S^v_k[j,j']\mathbf{X}_k^{(v)}$ & Second-order scattering coefficients \\
$S(\mathbf{B}^{(v)}, \mathbf{X}^{(v)})$ & Collection of all scattering features for view $v$ \\
$\Phi$ & Aggregated feature representation across all views \\
$\mathbf{z}$ & Multilayer perceptron used for classification \\
$\hat{\mathbf{y}}$ & Final model prediction \\
\bottomrule
\end{tabular}
\label{tab:notation}
\end{table}

\section{Related Works}\label{sec:related}
The vast majority of work on point cloud-based learning has focused on three dimensional problems, such as mapping and interpreting sensor readings in computer vision or localization tasks ~\cite{pointnet, pointnet++, pointtransformer, pointmlp, dgcnn}. Early methods such as PointNet~\cite{pointnet} and its successor PointNet++~\cite{pointnet++} introduced permutation-invariant models with the ability to extract local and global features from 3D point clouds. Newer methods such as Dynamic Graph Convolutional Neural Network (DGCNN)~\cite{dgcnn} leverage dynamic graph representations to better extract rich features from input point clouds, while PointMLP~\cite{pointmlp} and Point Transformer~\cite{pointtransformer} use standard Multilayer Perceptrons and local self-attention mechanism as the building blocks for better performance in classification and regression tasks. We are primarily interested in biological domains which are much higher dimensional, ranging from dozens in proteomics datasets to thousands in transcriptomic datasets, we require a new approach that is not limited by architectural decisions and is computationally efficient when working with high-dimensional data. Despite the advancements in these existing methods, they are fundamentally designed for 3D point clouds and struggle with performance and scalability in a high-dimensional setting as they rely on spatial heuristics, because local neighborhoods become less meaningful in high-dimension. As opposed to this, {\modelname} is able to scale to high-dimensional data while also preserving the geometry of the dataset.

Single-cell analysis has greatly benefited from graph-based methods that learns global structure from high-dimensional data. These methods typically rely on a \emph{single} graph construct to infer relationships between cellular states. Methods such as UMAP~\cite{mcinnes2020umapuniformmanifoldapproximation} and t-SNE~\cite{JMLR:v9:vandermaaten08a} perform non-linear dimensionality reduction by constructing a neighborhood graph and embedding cells into a low-dimensional space. On the other hand, PHATE~\cite{moon2019visualizing} is a dimensionality reduction method that captures both global and local nonlinear structure but only constructs a \emph{single} graph from the data. While these methods have been useful in understanding various biological processes, they may fail to organize cells based on processes governed by subsets of dimensions with just a single connectivity structure. In contrast to this, {\modelname} has the ability to model distinct cellular processes by leveraging its multi-view framework, preserving important geometric properties.

\section{Proof of Theorem~\ref{prop:connected_components_simplicial}}~\label{sec:connected_components_simplicial}
The following Theorem is a more detailed version of Theorem \ref{prop:connected_components_simplicial}, which states that the heat equation respects with the 0-homology of the underlying point cloud.
\begin{theorem}[Connected Components on Simplicial Complexes]
\label{proof:connected_components_simplicial}
Let $\mathcal{S}$ be a simplicial complex of order $K$ with $m$ connected components, and let $\Sigma_k$ be the set of $k$-simplices ($k \leq K$). We partition $\Sigma_k = \Sigma_{k,1} \sqcup \Sigma_{k,2} \sqcup \ldots \sqcup \Sigma_{k,m}$, where each $\Sigma_{k,i}$ corresponds to the $k$-simplices in a single connected component. Let $S$ be a subset of $k$-simplices, and assume that $S$ is the union of several connected components: $S = \Sigma_{k,i_1} \sqcup \ldots \sqcup \Sigma_{k,i_k}$. Assume the initial condition, $u_k(\cdot, 0)$, of the diffusion equation has support contained in $S$. Then, for any $k$-th order simplex $\sigma_k \notin S$ and for all $t > 0$, we have:
\begin{equation}
u_k(\sigma_k, t) = 0.
\end{equation}
\end{theorem}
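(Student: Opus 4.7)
My plan is to exploit the locality of the Hodge Laplacian $\Delta_k$ to show that it decomposes as a block diagonal operator with one block per connected component, and then conclude that the heat semigroup $e^{-t\Delta_k}$ preserves this block structure, so mass initially supported on $S$ can never leak into the complement.

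First I would order the $k$-simplices by connected component, writing the index set as the disjoint union $\Sigma_k = \bigsqcup_{i=1}^m \Sigma_{k,i}$. The key algebraic lemma to establish is: if $\sigma_k \in \Sigma_{k,i}$ and $\tau_k \in \Sigma_{k,j}$ with $i \neq j$, then $(\Delta_k)_{\sigma_k,\tau_k} = 0$. To see this, recall $\Delta_k = B_k^\top B_k + B_{k+1} B_{k+1}^\top$; the $(\sigma_k,\tau_k)$ entry of the first summand is a sum over common $(k-1)$-faces of $\sigma_k$ and $\tau_k$, while the second summand sums over common $(k+1)$-cofaces. Either situation forces $\tau_k \in \mathcal{N}_{\mathcal{L}}(\sigma_k)\cup\mathcal{N}_{\mathcal{U}}(\sigma_k) \subseteq \mathcal{N}_1(\sigma_k)$, putting them in the same connected component by the definition in Section \ref{sec: background}. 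This contradicts $i \neq j$, so the entry vanishes. Hence, in the block partitioning by connected components,
\begin{equation*}
\Delta_k = \mathrm{diag}\bigl(\Delta_{k,1}, \Delta_{k,2}, \ldots, \Delta_{k,m}\bigr),
\end{equation*}
where $\Delta_{k,i}$ is the restriction of $\Delta_k$ to $\Sigma_{k,i}$.

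Given this block structure, the heat semigroup factors as $e^{-t\Delta_k} = \mathrm{diag}(e^{-t\Delta_{k,1}},\ldots,e^{-t\Delta_{k,m}})$, since the exponential of a block diagonal matrix is block diagonal with the corresponding exponentials. The unique solution to the heat equation with initial condition $u_k(\cdot,0)$ is $u_k(\cdot,t) = e^{-t\Delta_k} u_k(\cdot,0)$. Because $u_k(\cdot,0)$ is supported on $S = \Sigma_{k,i_1} \sqcup \cdots \sqcup \Sigma_{k,i_\ell}$, its restriction to any $\Sigma_{k,j}$ with $j \notin \{i_1,\ldots,i_\ell\}$ is zero. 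Applying the corresponding block $e^{-t\Delta_{k,j}}$ to the zero vector yields zero, so $u_k(\sigma_k,t)=0$ for every $\sigma_k \notin S$ and every $t > 0$.

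I expect the only subtle step to be the block-diagonality lemma: one must carefully unpack the matrix product $B_k^\top B_k$ (and similarly $B_{k+1} B_{k+1}^\top$) to confirm that a nonzero off-diagonal entry forces the two simplices to be lower (resp.\ upper) adjacent, and then invoke the definition of connectedness through $\mathcal{N}_1$-walks. Everything else is a standard consequence of the functional calculus for symmetric matrices, so this verification is the main, though modest, obstacle.
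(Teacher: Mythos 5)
Your proof is correct and rests on the same key observation as the paper's: the locality of $\Delta_k$ means its $(\sigma,\tau)$ entry vanishes unless $\tau$ is lower or upper adjacent to $\sigma$ (or equal to it), so the Laplacian cannot couple distinct connected components. The only cosmetic difference is in how you finish: you make the block-diagonal decomposition of $\Delta_k$ explicit and apply the block-diagonal semigroup $e^{-t\Delta_k}$ to the initial condition, whereas the paper constructs the truncated function $\tilde{u}_k$ (equal to $u_k$ on $S$ and zero elsewhere), verifies it solves the heat equation, and invokes uniqueness of solutions --- both are standard and equally valid conclusions from the same locality lemma.
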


\begin{proof}
The Hodge-Laplacian $\Delta_k$ acts locally, meaning its $i,j$-th entry will be zero unless the $i$-th $k$ complex is a neighbor of the $j$-th $k$-complex (which can happen either via $i=j$ or via upper/lower adjacent neighbors). As a result, if the initial condition $u(\cdot, 0)$ has support contained in $S$, the solution $u(\sigma_k, t)$ must remain confined to $S$ for all $t > 0$. To formalize this, define a function $\tilde{u}(\sigma_k, t)$ as follows:
\begin{equation*}
\tilde{u}_k(\sigma_k, t) = 
\begin{cases}
u_k(\sigma_k, t), & \text{if } \sigma \in S, \\
0, & \text{if } \sigma \notin S.
\end{cases}
\end{equation*}

We will show that $\tilde{u}_k(\sigma_k, t)$ satisfies the same heat equation as $u(\sigma_k, t)$.

First, consider a simplex $\sigma_k \in S$. By definition, $\tilde{u}_k(\sigma_k, t) = u(\sigma_k, t)$ for all $t\geq 0$, and since $u_k(\sigma_k, t)$ satisfies the heat equation, it follows that:
\begin{equation*}
\frac{\partial \tilde{u}(\sigma_k, t)}{\partial t}=\frac{\partial u(\sigma_k, t)}{\partial t} = -\Delta_k u_k(\sigma_k, t)=-\Delta_k \tilde{u}_k(\sigma_k, t),
\end{equation*}
where in the final equality we used the fact that $\Delta_k$ acts locally.

Next, consider a simplex $\sigma_k \notin S$. By definition, $\tilde{u}_k(\sigma_k, t) = 0$ for all $t \geq 0$, which implies $\frac{\partial \tilde{u}_k(\sigma_k, t)}{\partial t} = 0$. Furthermore, since $\Delta_k$ acts locally and $\sigma_k$ is not in $S$, we have also: $\Delta_k \tilde{u}_k(\sigma_k, t) = 0$ (since neighbors must be in the same connected component). Thus, $ \frac{\partial \tilde{u}_k(\sigma_k, t)}{\partial t} = \Delta_k \tilde{u}_k(\sigma_k, t) = 0$, and so $\tilde{u}_k(\sigma_k, t)$  satisfies the heat equation.

Lastly, since $\tilde{u}_k(\sigma_k, t)$ satisfies the heat equation for all simplices $\sigma_k \in \mathcal{S}$ and coincides with the initial condition $u(\cdot, 0)$ on $S$, it follows from the uniqueness of solutions to the heat equation that $\tilde{u}_k(\sigma_k, t) = u_k(\sigma_k, t)$ for all $t > 0$.  
Therefore, for any simplex $\sigma_k \notin S$ and for all $t > 0$, we have: $u_k(\sigma_k, t) = \tilde{u}_k(\sigma_k, t) = 0$.
\end{proof}

\section{Proof of Theorem~\ref{prop:heat_equation_agreement}}\label{sec:heat_equation_agreement}
Below, we will state Theorem \ref{proof:heat_equation_agreement}, which is a more detailed statement of Theorem
\ref{prop:heat_equation_agreement}. First, we will introduce some notation and preliminaries.

We will assume that the vertices of $\mathcal{G}(\mathcal{S})$, (i.e., the simplices  $\sigma\in\mathcal{S}$) are ordered in a manner consistent with the size of the simplices, that is, all of the $k$ simplices come before all of the $k+1$ simplices in our ordering. 
Let $N_k$ denote the number of simplicial complexes of order $k$ and let $N=\sum_{k=0}^{K}N_k$ denote the cardinality of $\mathcal{S}$, i.e., the total number of simplices. For each $t\geq 0$, let $u_{\mathcal{G}}(\cdot,t)\in\mathbb{R}^N$ denote the  solution to the graph heat equation: \begin{equation}
\frac{\partial u_{\mathcal{G}}(\sigma, t)}{\partial t} = -\Delta_{\mathcal{G}} u_{\mathcal{G}}(\sigma, t), 
\end{equation} 
where $\Delta_{\mathcal{G}}=B(\mathcal{G}(\mathcal{S}))B(\mathcal{G}(\mathcal{S}))^T$ is the graph Laplacian of $\mathcal{G}(\mathcal{S})$, and $B(\mathcal{G}(\mathcal{S}))$ is the incidence matrix of $\mathcal{G}(\mathcal{S})$. (If $\mathcal{S}$ is an oriented simplex, then $B(\mathcal{G}(\mathcal{S}))$ is the signed incidence matrix, otherwise it is the unsigned incidence matrix.) Let $u_k$ denote the solution to the heat equation associated to $\Delta_K$ in Eqn.~\ref{eqn:heat_eqn_simplicial_complex} and, for $t\geq 0$ define the solution to the simplicial complex heat equation, $u_{\mathcal{S}}(\cdot,t)\in\mathbb{R}^N$ by \begin{equation}\label{eqn: SC heat eqn}
    u_{\mathcal{S}}(\cdot,t)=[u_0(\cdot,t),\ldots,\ldots, u_K((\cdot,t)]
\end{equation}
(i.e., the denote the concatenation of all of the $u_k$.)

\begin{theorem}[Agreement of Heat Equation Solutions]
\label{proof:heat_equation_agreement}
Let $\mathcal{S}$ be a simplicial complex, and let $\mathcal{G}(\mathcal{S})$ be its associated simplicial graph as defined in Definition~\ref{def:simplicial_graph}. Let $u_{\mathcal{S}}(\sigma, t)$ denote the solution to the heat equation on $\mathcal{S}$. (See Eqn \ref{eqn: SC heat eqn} and also Eqn.~\ref{eqn:heat_eqn_simplicial_complex}.) Similarly, let $u_{\mathcal{G}}(\sigma, t)$ denote the solution to the heat equation on $\mathcal{G}(\mathcal{S})$.

Assume that the initial conditions $u_{\mathcal{S}}(\cdot, 0)$ and $u_{\mathcal{G}}(\cdot, 0)$ are consistent, i.e.,
\begin{equation*}
u_{\mathcal{S}}(\sigma, 0) = u_{\mathcal{G}}(\sigma, 0) \quad \text{for all } \sigma \in V(\mathcal{G}).
\end{equation*}
Then, for all $t > 0$ and for all simplices $\sigma \in \mathcal{S}$, the solutions agree, i.e., we have 
$$u_{\mathcal{S}}(\sigma, t) = u_{\mathcal{G}}(\sigma, t).$$
\end{theorem}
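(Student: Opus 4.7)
\textbf{Proof proposal for Theorem \ref{prop:heat_equation_agreement}.} The plan is to exploit the ordering of vertices in $\mathcal{G}(\mathcal{S})$ by simplex order (all $0$-simplices first, then $1$-simplices, and so on, as stipulated before the theorem statement) and show that, in this basis, the operator $\Delta_{\mathcal{G}}$ is block-diagonal with the Hodge Laplacians $\Delta_k$ along the diagonal. Once this is established, the agreement of the solutions will follow from linearity and uniqueness for the associated ODEs.

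First I would observe the key structural fact: by Definition \ref{def:simplicial_graph}, two vertices of $\mathcal{G}(\mathcal{S})$ are joined by an edge only if they are upper or lower adjacent as simplices. Since upper/lower adjacency is defined only between simplices of the same order, every edge of $\mathcal{G}(\mathcal{S})$ connects two $k$-simplices for some single $k$. Hence the edge set partitions as $E(\mathcal{G})=E_0\sqcup E_1\sqcup\cdots\sqcup E_K$, where $E_k$ consists of edges between pairs of $k$-simplices. After ordering both the vertices and the edges by this grading, the incidence matrix takes the block-diagonal form
\begin{equation*}
B(\mathcal{G}(\mathcal{S}))=\mathrm{diag}\bigl(B^{(0)},B^{(1)},\dots,B^{(K)}\bigr),
\end{equation*}
where $B^{(k)}$ is the incidence matrix of the subgraph on the $k$-simplices. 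Consequently
\begin{equation*}
\Delta_{\mathcal{G}}=B(\mathcal{G}(\mathcal{S}))\,B(\mathcal{G}(\mathcal{S}))^T=\mathrm{diag}\bigl(B^{(0)}(B^{(0)})^T,\dots,B^{(K)}(B^{(K)})^T\bigr).
\end{equation*}

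Next I would identify each diagonal block with the Hodge Laplacian $\Delta_k=B_k^\top B_k+B_{k+1}B_{k+1}^\top$. The idea is that an edge in $E_k$ between $\sigma_k$ and $\sigma'_k$ records either a common $(k-1)$-face (contributing to $B_k^\top B_k$) or a common $(k+1)$-coface (contributing to $B_{k+1}B_{k+1}^\top$), so $B^{(k)}$ is precisely the incidence matrix assembled from the boundary operators $B_k$ and $B_{k+1}$ of $\mathcal{S}$, possibly with appropriate signs in the oriented case. A short bookkeeping argument on the $(i,j)$-entries of $B^{(k)}(B^{(k)})^T$, splitting the sum over edges in $E_k$ into the lower-adjacent and upper-adjacent parts, yields $B^{(k)}(B^{(k)})^T=\Delta_k$. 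The main obstacle I anticipate is getting the signs and multiplicities to line up exactly in the oriented case, which will require carefully specifying the orientation convention on the edges of $\mathcal{G}(\mathcal{S})$ so that the signed incidence matrix reproduces the $\pm 1$ pattern of the boundary maps $B_k$.

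Finally, with $\Delta_{\mathcal{G}}$ block-diagonal and the blocks equal to $\Delta_0,\ldots,\Delta_K$, the graph heat equation $\partial_t u_{\mathcal{G}}=-\Delta_{\mathcal{G}}u_{\mathcal{G}}$ decouples into $K+1$ independent systems, the $k$-th of which is exactly $\partial_t u_k=-\Delta_k u_k$, i.e., Eq.\ \ref{eqn:heat_eqn_simplicial_complex}. Under the hypothesis that the initial conditions agree on each $\sigma\in V(\mathcal{G})$, the restriction of $u_{\mathcal{G}}(\cdot,0)$ to the $k$-block equals $u_k(\cdot,0)$. By uniqueness of solutions to linear ODEs with the same generator and the same initial data, $u_{\mathcal{G}}(\sigma,t)=u_k(\sigma,t)=u_{\mathcal{S}}(\sigma,t)$ for every $k$-simplex $\sigma$ and every $t>0$, which is the desired conclusion.
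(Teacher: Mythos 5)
Your proposal takes essentially the same route as the paper: order the vertices of $\mathcal{G}(\mathcal{S})$ by simplex order, observe that $\Delta_{\mathcal{G}}$ is then block-diagonal with the Hodge Laplacians $\Delta_k$ as the diagonal blocks, and conclude by uniqueness of solutions to the resulting decoupled linear ODE systems. If anything, you supply more justification than the paper does --- the paper simply asserts the block form of $\Delta_{\mathcal{G}}$ in Eq.~\ref{eq:DeltaG}, whereas you correctly isolate the one genuinely delicate step, namely verifying that each diagonal block $B^{(k)}(B^{(k)})^T$ of the graph Laplacian actually coincides with $\Delta_k = \mathbf{B}_k^\top\mathbf{B}_k + \mathbf{B}_{k+1}\mathbf{B}_{k+1}^\top$ (entries, signs, and multiplicities included), which is the bookkeeping the paper leaves implicit.
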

\begin{proof}
Since the edge set of $\mathcal{G}(\mathcal{S})$ is defined in terms of upper and lower adjacent neighbors, $\Delta_{\mathcal{G}}$ can be written in block diagonal form as 
\begin{equation}
   \Delta_{\mathcal{G}} = 
   \begin{bmatrix}
      \Delta_0  & \mathbf{0} & \mathbf{0} & \cdots & \mathbf{0} & \mathbf{0}\\
       \mathbf{0}  & \Delta_1 & \mathbf{0} & \cdots & \mathbf{0} & \mathbf{0}\\
       \mathbf{0} & \mathbf{0}  & \Delta_2 & \cdots & \mathbf{0} & \mathbf{0}\\
       \vdots & \vdots & \vdots & \ddots & \vdots & \vdots\\
       \mathbf{0} & \mathbf{0} & \mathbf{0} & \cdots & \Delta_{K-1} & \mathbf{0}\\
       \mathbf{0} & \mathbf{0} & \mathbf{0} & \cdots & \mathbf{0} & \Delta_K\\
    \end{bmatrix}.
    \label{eq:DeltaG}
\end{equation}
%
%


This allows us to see that $u_{\mathcal{S}}$ also satisfies the graph equation equation since
\begin{align*}
    %
-\Delta_{\mathcal{G}}\, u_{\mathcal{S}}(\cdot, t) &=
    \begin{bmatrix}
       -\Delta_0  & \mathbf{0} & \mathbf{0} & \cdots & \mathbf{0} & \mathbf{0}\\
       \mathbf{0}  & -\Delta_1 & \mathbf{0} & \cdots & \mathbf{0} & \mathbf{0}\\
       \mathbf{0} & \mathbf{0}  & -\Delta_2 & \cdots & \mathbf{0} & \mathbf{0}\\
       \vdots & \vdots & \vdots & \ddots & \vdots & \vdots\\
       \mathbf{0} & \mathbf{0} & \mathbf{0} & \cdots & -\Delta_{K-1} & \mathbf{0}\\
       \mathbf{0} & \mathbf{0} & \mathbf{0} & \cdots & \mathbf{0} & -\Delta_K\\
    \end{bmatrix}u_{\mathcal{S}}(\cdot,t)\\
    &=
    \begin{bmatrix}
       -\Delta_0  & \mathbf{0} & \mathbf{0} & \cdots & \mathbf{0} & \mathbf{0}\\
       \mathbf{0}  & -\Delta_1 & \mathbf{0} & \cdots & \mathbf{0} & \mathbf{0}\\
       \mathbf{0} & \mathbf{0}  & -\Delta_2 & \cdots & \mathbf{0} & \mathbf{0}\\
       \vdots & \vdots & \vdots & \ddots & \vdots & \vdots\\
       \mathbf{0} & \mathbf{0} & \mathbf{0} & \cdots & -\Delta_{K-1} & \mathbf{0}\\
       \mathbf{0} & \mathbf{0} & \mathbf{0} & \cdots & \mathbf{0} & -\Delta_K\\
    \end{bmatrix}
    \begin{bmatrix}
        u_0(\cdot, t)\\
        u_1(\cdot, t)\\
        u_2(\cdot, t)\\
        \vdots\\
        u_K(\cdot, t)
    \end{bmatrix}\\
    &=
    \begin{bmatrix}
        -\Delta_0\, u_0(\cdot, t)\\
        -\Delta_1\, u_1(\cdot, t)\\
        -\Delta_2\, u_2(\cdot, t)\\
        \vdots\\
        -\Delta_K\, u_K(\cdot, t)
    \end{bmatrix}
    =
    \begin{bmatrix}
        \frac{\partial u_0(\cdot, t)}{\partial t}\\
        \frac{\partial u_1(\cdot, t)}{\partial t}\\
        \frac{\partial u_2(\cdot, t)}{\partial t}\\
        \vdots\\
        \frac{\partial u_K(\cdot, t)}{\partial t}
    \end{bmatrix}=\frac{\partial}{\partial_t}u_{\mathcal{S}}(\cdot,t).
\end{align*}
Therefore, $u_{\mathcal{G}}=u_{\mathcal{S}}$ by uniqueness of solutions.

\end{proof}

\section{The proof of Theorem~\ref{thm:geodesicdist}}~\label{sec:geodesicdist}
The following is a more detailed statement of Theorem \ref{thm:geodesicdist}, which states that the diffusion on simplices captures the geodesic distances. 
\begin{theorem}[Capturing Geodesic Distance]
\label{proof:geodesicdist} Assume that a transformed point cloud $\tilde{\mathcal{X}}^{(v)}$ lies upon a Riemannian manifold $\mathcal{M}$.
Let $\mathcal{S}$ be an oriented  simplicial complex constructed from $\tilde{\mathcal{X}}^{(v)}$, and let $\mathcal{G}(\mathcal{S})$ be its associated simplicial graph. Denote the $0$-th order  heat kernel on $\mathcal{S}$ by $H^t_0(\sigma, \tau)$, where $\sigma, \tau$ represent simplices of any order in $\mathcal{S}$. 
So that $H_0^tu_0(\cdot,0)$ solves the heat equation with initial condition $u_0(\cdot,0)$.
Then, we may approximate the geodesic distance $d_{\mathcal{M}}(\sigma, \tau)$ on the manifold $\mathcal{M}$ from $H_0^t$.
\end{theorem}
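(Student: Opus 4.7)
The plan is to chain together three ingredients: (i) Varadhan's formula on the manifold, (ii) Theorem~\ref{prop:heat_equation_agreement} relating the simplicial complex heat kernel to the simplicial graph heat kernel, and (iii) Theorem~3 of \cite{dunson2021spectral} which provides convergence of the graph heat kernel to the manifold heat kernel. The end product will express $d_{\mathcal{M}}(x,y)$ as a limit of quantities derived from $H^t_0$.

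First, I would recall Varadhan's asymptotic formula: for the heat kernel $H^t_{\mathcal{M}}$ of the Laplace-Beltrami operator on a complete Riemannian manifold $\mathcal{M}$,
\begin{equation*}
d_{\mathcal{M}}(x,y)^2 \;=\; -\lim_{t \to 0^+} 4 t \,\log H^t_{\mathcal{M}}(x,y).
\end{equation*}
Thus, knowing $H^t_{\mathcal{M}}$ for small $t$ is enough to recover geodesic distances, so it suffices to show that the simplicial heat kernel $H^t_0(\sigma,\tau)$ (restricted to $0$-simplices $\sigma = x_i, \tau = x_j$) approximates $H^t_{\mathcal{M}}(x_i, x_j)$ for the underlying sample points.

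Next, I would invoke Theorem~\ref{prop:heat_equation_agreement}. By the block-diagonal structure of $\Delta_{\mathcal{G}}$ in Eq.~\eqref{eq:DeltaG}, the heat kernel on the simplicial graph $\mathcal{G}(\mathcal{S})$ splits along simplex order, and its $0$-th diagonal block is exactly the heat kernel generated by $\Delta_0$. For an oriented complex, $\Delta_0 = \mathbf{B}_1 \mathbf{B}_1^\top$ is the ordinary graph Laplacian of the $1$-skeleton of $\mathcal{S}$ (a graph on $\tilde{\mathcal{X}}^{(v)}$ whose edges are the $1$-simplices determined by the Vietoris-Rips construction with kernelized weights $d^{(v)}_{ij}$). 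Hence $H^t_0$ restricted to pairs of vertices coincides with the heat kernel of this weighted graph.

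Then, I would apply Theorem~3 of \cite{dunson2021spectral}, which establishes that, under appropriate sampling and bandwidth conditions on the graph constructed from points sampled on $\mathcal{M}$, the graph heat kernel converges pointwise (up to normalization) to the manifold heat kernel $H^t_{\mathcal{M}}(x_i,x_j)$ as the number of samples $n\to\infty$ and the kernel bandwidth $\sigma\to 0$ at suitable rates. Composing this convergence with Varadhan's formula yields
\begin{equation*}
d_{\mathcal{M}}(x_i,x_j)^2 \;\approx\; -\,4 t \,\log H^t_0(x_i,x_j)
\end{equation*}
for small $t$ and sufficiently large $n$, which is the claimed approximation.

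The main obstacle will be controlling the interplay of limits: Varadhan requires $t \to 0^+$ for the manifold heat kernel, while the convergence result from \cite{dunson2021spectral} holds for fixed $t$ as $n \to \infty$ and $\sigma \to 0$. I would handle this by taking $t = t(n) \to 0$ slowly relative to the rate at which the graph kernel converges, so that both approximations remain valid simultaneously; this is the standard diagonal-limit argument used in spectral convergence proofs for diffusion operators. A secondary concern is that the oriented/unoriented distinction does not affect $\Delta_0$ (since there are no $(-1)$-simplices and signs only enter at order $\geq 1$), so the argument transfers without modification to either choice of orientation.
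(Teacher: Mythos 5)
Your proposal takes essentially the same route as the paper's proof: Varadhan's formula, Theorem~3 of \cite{dunson2021spectral} for convergence of the graph heat kernel to the manifold heat kernel, and Theorem~\ref{prop:heat_equation_agreement} to identify the $0$-th block of the simplicial heat kernel with the graph heat kernel. Your treatment is in fact somewhat more careful than the paper's, since you explicitly flag and resolve the order-of-limits issue between $t\to 0^+$ and $n\to\infty$, which the paper's proof passes over in silence.
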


\begin{proof}

Let $h_t$ denote the heat kernel on the underlying manifold. 
Varadhan's formula \citep{varadhan} shows that 
\begin{equation*}
    \lim_{t \to 0} -4t \log h_t(\sigma, \tau) = d(\sigma, \tau)^2
\end{equation*}
 Theorem 3 of \cite{dunson2021spectral} shows that the graph heat kernel $H_0^t$ uniformly converges to the manifold heat kernel $h_t$ and so the result follows from Theorem \ref{proof:heat_equation_agreement} which establishes the equivalence of the graph heat equation and the heat equation on simplicial complexes.
\end{proof}

\section{Proof of Corollary~\ref{col:vol}}\label{proof:vol}
\noindent\textbf{Corollary 5.} \emph{The equivalence between the heat kernel on $\mathcal{S}$ and $\mathcal{G}(\mathcal{S})$ implies the equivalence of dimension, volume, and total scalar curvature on the respective underlying data manifolds.}

\begin{proof}
Let $h_t$ denote the heat kernel on the manifold $\mathcal{M}$. For any two points $x,y\in\mathcal{M}$. The eigenvalues $\lambda_i$ and corresponding eigenfunctions $\phi_i$ of the Laplace-Beltrami operator determine the manifold heat kernel via the spectral expansion:
$$h_t(x, y) = \sum_{i=0}^\infty e^{-\lambda_i t} \phi_i(x) \phi_i(y)$$
The eigenvalues are influenced by the volume and curvature of the manifold via Weyl's law~\cite{weyl} and the eigenvalue comparison theorems~\cite{cheng}. Thus, as in the proof of Theorem \ref{thm:geodesicdist}, the result follows from Theorem 3 of \cite{dunson2021spectral}.
\end{proof}
\section{Data Cohorts}
\label{sec:data}
\begin{itemize}
    \item \textbf{Melanoma}:
    We use data originally collected by \citet{melanoma} and made available in \citet{https://doi.org/10.17632/79y7bht7tf.1}, which consists of 54 melanoma patients, who underwent checkpoint blockade immunotherapy. The data contains  approximately 489 to 1784 cells from the tumor micro-environments of each patient, resulting in a total of 11,862 cells. Each cell is characterized by 29 protein markers measured by Multiplexed Ion Beam Imaging (MIBI) \cite{yeo2024hitchhiker, angelo2014multiplexed}. This data can be modeled as 54 point clouds in a 29-dimensional space, with sample sizes ranging from 489 to 1784 points per point cloud. The learning task is binary classification of whether the patient experienced recurrence or not (including stable disease as a `non-recurrence'), predicted from the protein expression levels. We note that MIBI is able to measure the spatial locations of the cells as well as their protein counts. However, in our experiments, we do not use the spatial locations of MIBI. We make this choice so that, with this data set, we can assess the ability of methods to learn purely from the high-dimensional point clouds corresponding to protein counts.


    \item \textbf{Patient-derived Organoids (PDOs)}:
    We next consider data from~\citet{trellis-pdo} featuring patient-derived organoids (PDOs) from 12 different patients with colorectal cancer. Patient-derived organoids (PDOs) are cell cultures grown from patient tumor samples. Each culture, representing a PDO sample, is characterized by 44 proteins and contains approximately 1137 cells. Collectively, we consider 1,678 such cultures, each of which is viewed as a 44-dimensional point cloud, resulting in a total of 2 million cells. The goal is to infer treatment administered to the PDO based on cellular states, serving as a synthetic task for our model.

    \item \textbf{Spatial Transcriptomics (ST)}:
    We next consider Spatial Transcriptomics data generated using a 40-plex CODEX (CO-Detection by Indexing) immunofluorescent imaging workflow from~\citet{space-gm}, capturing 40 protein markers per cell across datasets from UPMC, DFCI, and Charville, comprising 500 patients, each of whom either had head-and-neck cancer or colorectal cancer. 
    Here we construct views in two different manners: 
    \begin{itemize}
        \item \emph{Spatial View:} Cellular coordinates describing how cells are arranged in two-dimensional tissue sections.
        \item \emph{Gene View:} Dozens of markers or transcripts measured per cell, detailing complex biological states.
    \end{itemize}
    The overarching goal is to predict outcome of chemotherapy on cancer patients. Another goal is to predict cancer relaspe of recovered patients. This task particularly leverages \modelname's ability to fuse spatial context with transcriptional signals for clinically relevant insights in cancer research.

\end{itemize}

\section{Experimental Setup}
\label{sec:experimental_setup}
The parameters of {\modelname} are optimized using the AdamW optimizer \cite{adamw} with a constant learning rate of $10^{-4}$ and a weight decay of $10^{-4}$. We train the model for 100 epochs on every dataset for each model. We record the best metric (accuracy in classification tasks; mean squared error in regression tasks) achieved on the test set in each training run.  We compare model performances on each dataset and task by the mean and standard deviation of their resulting five metric scores. For the spatial transcriptomics data, we use the folds from~\citet{space-gm}. In Table~\ref{tab:K_max}, we describe order of the simplicial complex that we have used for each data set.
Typically, our setup necessitates approximately two to three hours of training time for each dataset on a single NVIDIA A100 GPU.

\begin{table}[t]
    \centering
    \begin{tabular}{ccc}
    \toprule
    \textbf{Data} & \textbf{Task} & $\mathbf{K}$\\
    \midrule
    \multirow{2}{*}{Melanoma} & Classification & 1\\
     & Persistence prediction & 1\\
    \hline
    \multirow{2}{*}{PDO} & Classification & 1\\
     & Persistence prediction & 1\\
    \hline
     DFCI & Outcome & 1\\
    \hline
     \multirow{2}{*}{Charville} & Outcome & 2\\
      & Recurrence & 1\\
    \hline
     \multirow{2}{*}{UPMC} & Outcome & 2\\
      & Recurrence & 1\\
    \bottomrule
    \end	{tabular}
    \caption{Order of simplicial complex}
    \label{tab:K_max}
\end{table}

\begin{table}[t]
    \centering
    \begin{tabular}{l c}
        \toprule
        \textbf{Model} & \textbf{Accuracy} \\
        \midrule
        GCN & 41.67 $\pm$ 25.00 \\
        GIN & 38.89 $\pm$ 7.86 \\
        Graph Transformer & 50.00 $\pm$ 16.67 \\
        \midrule
        HiPoNet & 63.60 $\pm$ 0.00 \\
        \bottomrule
    \end{tabular}
    \caption{Comparison of model accuracy on Spatial Data.}
    \label{tab:model_accuracy}
\end{table}

\subsection{Baselines}
\label{sec:baselines}
We compare the performance of {\modelname} on the classification tasks outlined in Section \ref{sec:tasks} to two groups of alternative methods. First, to demonstrate the expressive power of our multiview graph embeddings over traditional graph neural network techniques, we construct K-nearest neighbor graphs of the input point cloud data and attempt to learn a classifier using several popular graph neural network architectures. These include Graph Convolutional Networks (GCN) \cite{gcn}, GraphSAGE \cite{sage}, Graph Attention (GAT) Networks \cite{gat}, Graph Isomorphism Networks (GIN) \cite{gin}, and Graph Transformer Networks (GTNs)~\cite{graph-transformers}. We discuss their performance on our high-dimensional tasks in Section \ref{sec:results}. 

Second, we compare our method directly state-of-the art, bespoke point cloud learning methods. While DGCNN \cite{dgcnn}, PointNet++ \cite{pointnet++} and PointTransformer \cite{pointtransformer} all perform very well on the three-dimensional problems they were designed to tackle, as shown in Table~\ref{tab:classification}, they struggle to successfully classify high-dimensional point clouds. 

\section{Computational Complexity}\label{sec: complexity}
The computational complexity of one step of diffusion process, performed via sparse multiplication, has the complexity of $\mathcal{O}(\Sigma_{k=1}^K D_kN_k)$, where $D_k$ is a dimension of the features of $k$-simplices. 
To construct the wavelets, we need to perform $J$ steps of diffusion for $V$ views. Therefore, the cost of the  SWT is $\mathcal{O}(VJ(\Sigma_{k=1}^K D_kN_k))$. For the second order scattering coefficients there are $\mathcal{O}(J^2)$ different choices of $j$ and $j'$. Thus the cost is $\mathcal{O}(VJ^2(\Sigma_{k=1}^K D_kN_k))$ 
The complexity of reweighing through Hadamard product is $\mathcal{O}(D_0N_0)$. The complexity of VR-filtration is $\mathcal{O}(N_0^2)$. So, the total computational complexity of {\modelname} is $\mathcal{O}( N_0^2 + VJ^2(\Sigma_{k=1}^K D_kN_k))$. 

\section{Ablation}\label{sec:extra-results}

In Table \ref{tab:viewablation}, we present an ablation study of \modelname{} showing the impact of varying number of views. The results indicate that four views is most effective for capturing the topology of the point clouds.

\begin{table*}[t]
    \centering
    \scalebox{1.0}{\begin{tabular}{ccccc}
    \toprule
    \textbf{Num. of Views} & \textbf{Melanoma} & \textbf{PDO} \\
    \midrule
    1 &  27.27 $\pm$ 12.85 & 59.80 $\pm$ 1.99 \\
    2 & \underline{54.54 $\pm$ 11.13} & \underline{61.10 $\pm$ 0.17}\\
    4 & \textbf{90.90 $\pm$ 4.92} & \textbf{77.38 $\pm$ 0.94}\\
    \bottomrule
    \end{tabular}}
    \caption{Accuracies (mean $\pm$ standard deviation) from ablation on the number of graphs. Best result is bolded and second best is underlined.}
    \label{tab:viewablation}
\end{table*}

To assess the contribution of each component—multi-view learning, simplicial complex modeling, and wavelet transform—we conducted an ablation study on the \textbf{Melanoma} and \textbf{PDO} datasets. As shown in Table~\ref{tab:ablation}, removing multi-view learning causes the largest performance drop ($63.6\%$ on Melanoma, and $17.6\%$ on PDO), highlighting its central role. Additionally, we see that not using the structure of the data, but instead  passing the point cloud features directly  into an MLP classifier (without forming a  simplicial complex), leads to severe degradation in performance on both tasks, with a  $ 63\%$ drop on PDO and $44.8\%$ drop on Melanoma. Finally, we see that  removing the wavelet transform (i.e., using diffusion only)  lowers performance across both datasets (with a $20\%$ drop on Melanoma and a $25.85\%$ drop on PDO). These results confirm that each module plays a crucial role, with multi-view learning and structural modeling being especially vital.

\begin{table}[t]
\centering
\begin{tabular}{ccc}
\toprule
\textbf{Model} & \textbf{Melanoma} & \textbf{PDO} \\
\midrule
HiPoNet & \textbf{90.90 $\pm$ 4.92} & \textbf{77.38 $\pm$ 0.94} \\
w/o multi-view & 27.27 $\pm$ 12.85 & 59.80 $\pm$ 1.99 \\
w/o structure & 46.08 $\pm$ 8.38 & 14.09 $\pm$ 1.49 \\
w/o reweighing & 56.36 $\pm$ 7.60 & 48.30 $\pm$ 3.99 \\
w/o wavelets & 70.90 $\pm$ 14.93 & 41.53 $\pm$ 1.84 \\
\bottomrule
\end{tabular}
\caption{Ablation over components.}
\label{tab:ablation}
\end{table}

\begin{table}[t]
\centering
\begin{tabular}{ccc}
\toprule
\textbf{Threshold for V-Rips $\epsilon$} & \textbf{Melanoma} & \textbf{PDO} \\
\midrule
0.15 & 68.18 $\pm$ 9.42 & \textbf{77.38 $\pm$ 0.94} \\
0.25 & 65.45 $\pm$ 21.70 & 60.44 $\pm$ 1.61 \\
0.50 & \textbf{90.90 $\pm$ 4.92} & 63.10 $\pm$ 0.00 \\
0.75 & 77.27 $\pm$ 15.21 & 68.92 $\pm$ 0.94 \\
1.00 & 61.81 $\pm$ 13.48 & 64.96 $\pm$ 0.85 \\
1.25 & 61.81 $\pm$ 7.60 & 62.00 $\pm$ 0.51 \\
\bottomrule
\end{tabular}
\caption{Sensitivity analysis with respect to the Vietoris–Rips threshold.}
\label{tab:threshold_sensitivity}
\end{table}

\begin{table}[h]
\centering
\begin{tabular}{ccc}
\toprule
\textbf{Kernel Bandwidth $\sigma$} & \textbf{Melanoma} & \textbf{PDO} \\
\midrule
0.25 & 61.81 $\pm$ 7.60 & 60.44 $\pm$ 1.21 \\
0.50 & 61.81 $\pm$ 11.85 & 63.61 $\pm$ 1.21 \\
0.75 & 65.45 $\pm$ 7.60 & \textbf{77.38} $\pm$ 0.94 \\
1.00 & \textbf{90.90} $\pm$ 4.92 & 62.44 $\pm$ 1.46 \\
1.25 & 79.09 $\pm$ 16.51 & 62.00 $\pm$ 0.51 \\
\bottomrule
\end{tabular}
\caption{Sensitivity analysis of kernel bandwidth (\%).}
\label{tab:bandwidth_sensitivity}
\end{table}

\begin{table}[h]
\centering
\begin{tabular}{cccc}
\toprule
\textbf{Dataset / Task (Metric)} & \textbf{$K=1$} & \textbf{$K=2$} & \textbf{$K=3$} \\
\midrule
PDO (Accuracy) & \textbf{77.38 $\pm$ 0.94} & 72.30 $\pm$ 0.18 & 70.76 $\pm$ 0.71 \\
Charville (Outcome AUC-ROC) & 0.55 $\pm$ 0.001 & \textbf{0.598 $\pm$ 0.12} & 0.539 $\pm$ 0.05 \\
Charville (Recurrence AUC-ROC) & \textbf{0.681 $\pm$ 0.01} & \textbf{0.681 $\pm$ 0.01} & \textbf{0.681 $\pm$ 0.01} \\
UPMC (Recurrence AUC-ROC) & 0.538 $\pm$ 0.03 & \textbf{0.6044 $\pm$ 0.0} & 0.583 $\pm$ 0.01 \\
\bottomrule
\end{tabular}
\caption{Performance across datasets and tasks for varying $K$.}
\label{tab:transposed_performance}
\end{table}

To assess the sensitivity to the threshold, we conducted a sensitivity analysis on the {Vietoris–Rips threshold}, which governs the construction of higher-order simplices, using the single-cell classification datasets. 
As shown in Table~\ref{tab:threshold_sensitivity}, performance varies with the threshold. On \textbf{Melanoma}, the best result is achieved at 0.50, while on \textbf{PDO}, a lower threshold of 0.15 yields the highest accuracy, indicating that the optimal choice can be dataset-specific. To select the appropriate threshold, we tune it until reaching a critical point—beyond which the number of edges grows rapidly. These critical points are used in our analysis. Notably, we found that PDO has a critical point at 0.15, which leads to improved performance compared to what was reported in the main text. These results suggest that the model is sensitive to this hyperparameter, and careful selection of the threshold is important for achieving strong performance.

To assess the effect of kernel bandwidth, we performed a sensitivity analysis by varying the bandwidth used during graph construction. As shown in Table~\ref{tab:bandwidth_sensitivity}, performance on the Melanoma dataset initially improves  with increasing bandwidth and peaks at 1.00 ($90.90 \pm 4.92$), after which it slightly declines. On the PDO dataset, however, the best performance is observed at 0.75 ($77.38 \pm 0.94$), suggesting that overly large bandwidths may oversmooth the neighborhood structure in some domains. These results indicate that kernel bandwidth is a sensitive hyperparameter and should be tuned based on dataset characteristics. We will incorporate this analysis and its implications into the revised manuscript.

Table \ref{tab:transposed_performance}  presents results across four tasks—PDO, Charville (Outcome and Recurrence tasks), and UPMC (Recurrence task)—with $K = 1, 2, 3$ representing the maximum dimension of simplices used during complex construction. We observe that for \textit{PDO}, performance peaks at $K=1$, suggesting that lower-order interactions are sufficient. In contrast, for outcome prediction on \textit{Charville} and \textit{UPMC}, higher-order structures ($K=2$) improve performance, particularly for the UPMC dataset where AUC increases from $0.538$ to $0.6044$. Notably, performance saturates or slightly drops at $K=3$, indicating diminishing returns from overly complex topology.

\section{Broader Impacts}\label{sec:impact}
This work introduces HiPoNet, a neural network for high-dimensional point cloud analysis with applications in biological data such as single-cell and spatial transcriptomics. Positive societal impacts include advancing our understanding of complex biological systems, enabling discoveries in disease mechanisms, and contributing to the development of more effective treatments through improved modeling of gene and cell interactions.
Potential negative impacts include the risk of over-reliance on model predictions in critical biological or clinical decision-making without sufficient experimental validation. Additionally, applying HiPoNet to sensitive biomedical datasets raises privacy concerns if appropriate safeguards are not maintained. We recommend that HiPoNet be used primarily for hypothesis generation rather than direct clinical application, with findings validated through rigorous experimental studies. Responsible data handling practices and collaboration with domain experts are essential to mitigate these risks.

\section{Limitations}\label{sec:limitations}
While HiPoNet advances the modeling of high-dimensional point clouds, it has several limitations. First, although HiPoNet 
is relatively efficient, training on very large datasets still requires substantial compute resources, especially for larger values of $K$. Second, the current model has been primarily validated on biological datasets; its generalization to other high-dimensional domains remains to be explored. Finally, HiPoNet focuses on predictions rather than inferring causal relationships, limiting its direct applicability for causal inference tasks.

\clearpage
\bibliographystyle{plainnat}  
\bibliography{main}

\end{document}